\PassOptionsToPackage{table,x11names}{xcolor}
\documentclass{article}

\PassOptionsToPackage{numbers}{natbib}
\PassOptionsToPackage{breaklinks, colorlinks}{hyperref}
\usepackage[preprint]{Formatting_Instructions_For_NeurIPS_2026/neurips_2026}

\usepackage[utf8]{inputenc} % allow utf-8 input
\usepackage[T1]{fontenc}    % use 8-bit T1 fonts
\usepackage{url}            % simple URL typesetting
\usepackage{booktabs, pgfplots}
\usepackage{xcolor}
\usepackage{booktabs}
\usepackage{multirow}
\usepackage[normalem]{ulem}
\usepackage{siunitx}
\usepackage{amsfonts}       % blackboard math symbols
\usepackage{nicefrac}       % compact symbols for 1/2, etc.
\usepackage{microtype}      % microtypography
\usepackage{graphicx}
\usepackage{subcaption}
\usepackage{makecell}
\usepackage{wrapfig}
\usepackage{caption}
\usepackage{algorithm}
\usepackage{algorithmic}
\usepackage{float}

\usepackage{amsmath,amssymb,amsthm}
\usepackage{mathtools}

\usepackage{hyperref}
\let\cite=\citep

\definecolor{citegreen}{rgb}{0.0, 0.5, 0.0} 
\definecolor{neuripsblue}{RGB}{25, 25, 112}

\hypersetup{
    colorlinks = true,
    linkcolor=red,
    citecolor=blue,
    urlcolor=black,           % 外部 URL 链接的颜色
}

\newcommand{\D}{\mathcal{D}}
\newcommand{\Z}{\mathcal{Z}}
\newcommand{\Zval}{\mathcal{Z}_{\val}}
\newcommand{\Zpool}{\mathcal{Z}_{\mathrm{pool}}}
\newcommand{\val}{\mathrm{val}}

\theoremstyle{definition}
\newtheorem{definition}{Definition}[section]

\newtheorem{proposition}{Proposition}[section]
\newtheorem{theorem}{Theorem}[section]

\newtheorem{lemma}{Lemma}[section]

\theoremstyle{remark}

\title{Let the Target Select for Itself: Data Selection via Target-Aligned Paths}

\author{
  Huitao Yang \quad Hengzhi He \quad Guang Cheng \\
  University of California, Los Angeles \\
  \texttt{\{htyang03,hengzhihe,guangcheng\}@ucla.edu} \\
}
\pgfplotsset{compat=1.18}

\begin{document}

\maketitle

\begin{abstract}
Targeted data selection aims to identify training samples from a large
candidate pool that improve performance on a specific downstream task.
Many recent methods estimate candidate utility by aggregating local
attribution scores along a trajectory induced by the candidate pool. When
the pool is heterogeneous, however, this reference trajectory may be
misaligned with the dynamics of a target-aligned selected subset, creating
what we call \textit{reference path bias}. We propose an alternative
reference path: a \textit{validation-induced flow} obtained from a short,
capacity-limited warmup on the available target validation proxy. Along
this path, candidates are scored by a normalized endpoint loss drop,
yielding a simple zero-order selection rule that requires no candidate
gradients or Hessian approximations. Across controlled logistic, vision,
and instruction-tuning experiments, this score is competitive with strong
dynamic attribution baselines while substantially reducing warmup and
storage cost. Moreover, since the reference trajectory is decoupled from
any specific candidate pool, the same compact warmup can be reused across
additional pools without recomputing the trajectory.
\end{abstract}
\section{Introduction}

Modern targeted data selection is increasingly shaped by a basic tension:
high-quality attribution signals are expensive to collect, while cheap scores
can miss the examples that matter for a target task. Given a large candidate
pool and a small target validation set~\cite{xia2024lessselectinginfluentialdata,
killamsetty2021glistergeneralizationbaseddata,
killamsetty2021gradmatchgradientmatchingbased}, influence-function scores provide a
clean local objective but require inverse-Hessian information, while scalable
trajectory methods replace this term with gradient alignments collected along
training checkpoints: 
\begin{equation}
\label{eq:intro_attribution_tradeoff}
I_{\mathrm{IF}}(z)
=
\nabla_\theta R_{\val}(\hat\theta)^\top
H_{\hat\theta}^{-1}\nabla_\theta \ell(\hat\theta;z),
\qquad
I_{\mathrm{path}}(z)
=
\sum_{t\in\mathcal T}\eta_t
\left\langle \nabla_\theta R_{\val}(\theta_t),
\nabla_\theta \ell(\theta_t;z)\right\rangle .
\end{equation}
Both scores use the ``larger is better'' utility convention here.
Classical influence functions~\cite{pmlr-v70-koh17a} instantiate the first
score, whereas TracIn-style attribution and recent targeted-selection methods~
\cite{pruthi2020estimatingtrainingdatainfluence,
xia2024lessselectinginfluentialdata, jain2025trainvalidationtovfast,
min2026gisttargeteddataselection} use variants of the second idea. This
trajectory view removes the Hessian bottleneck, but it also makes the choice of
reference path part of the compute--quality tradeoff. Adding checkpoints can
improve a utility estimate only when those checkpoints lie on a path that is
informative for the target; otherwise, extra trajectory compute can refine the
wrong signal.

Figure~\ref{fig:method_idea} illustrates this failure mode. In a heterogeneous
candidate pool, a reference trajectory induced by the raw pool can stay on a
pool-dominated manifold while never visiting some regions on the target
manifold. Examples from those regions may be useful after the model is moved
toward the target, but they can look unimportant when their utility is measured
only along the pool-induced path. The failure is therefore not merely selecting
obviously bad examples; it is failing to expose potentially valuable data under
the wrong reference trajectory. We refer to this failure mode as
\textit{reference-path bias}. The issue is not only how many checkpoints are
collected, but whether the path used to collect them reveals candidate examples
associated with the target manifold.

Motivated by this issue, we propose \textbf{T}arget-\textbf{A}ligned
\textbf{C}andidate \textbf{S}election (TACS), which replaces the pool-induced
reference path with a lightweight trajectory induced by the target proxy. The
core premise is that the best selected subset should train more like the target
distribution than like the raw candidate pool. Instead of asking how candidates
affect the target along a pool trajectory, TACS asks which candidates become
easier as the model moves along the target-proxy path. This path inversion gives
a simple forward-pass score while keeping trajectory construction independent of
the candidate pool.

In summary, our framework aligns the optimization geometry of data attribution
with the target task. Our core contributions are:

\begin{figure}[t]
    \centering
    \includegraphics[width=0.96\textwidth]{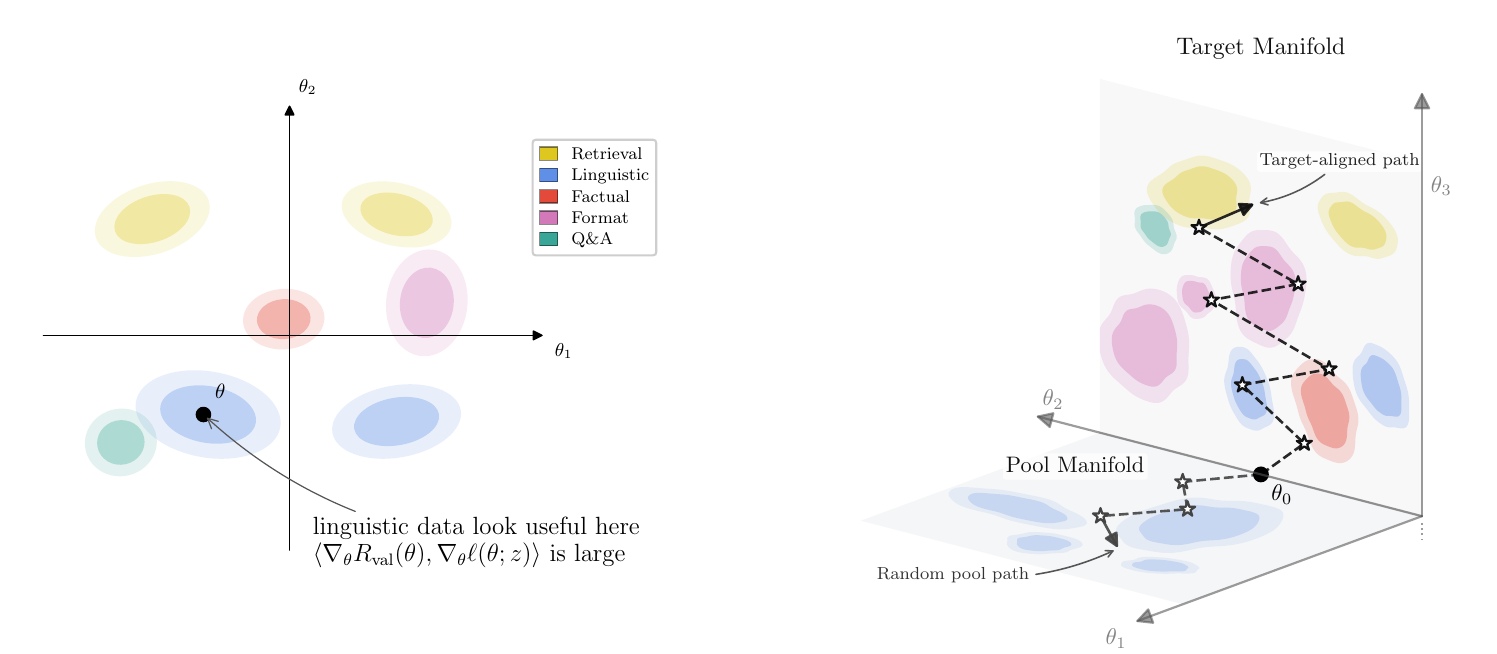}
    \caption{
Conceptual illustration. \textbf{Left:} a 2D slice of parameter space; each
colored region marks where that data type has high local alignment.
\textbf{Right:} two possible reference trajectories on different manifolds. A
pool-induced path can miss regions on the target manifold where some candidates
become visible, even though those candidates may be valuable under a more
target-aligned trajectory.
}
\label{fig:method_idea}
\end{figure}

\begin{itemize}
    \item \textbf{(I) A Trajectory Integral Perspective and Path Inversion:} We frame targeted data selection as a time integral along an optimization path and invert the standard approximation. By introducing the \textit{Validation-Induced Flow}, we reduce dependence on pool-induced reference paths and bias the integration path toward the target proxy rather than heterogeneous pool dynamics.
    
    \item \textbf{(II) Zero-Order Scoring via Capacity Bottlenecks:} We provide an analytical interpretation, supported by empirical evidence, for using a candidate's macroscopic loss drop along this target path as a simple zero-order proxy for downstream utility. To reduce rapid memorization on the small validation set, we use a \emph{capacity bottleneck} during the warmup phase, which encourages the trajectory to capture broader target structure.
    
    \item \textbf{(III) Reusability and Pool-Independent Warmup:} This target-driven architecture supports a ``compute-once, score-many'' paradigm within a fixed model and target setup. Because the warmup path is generated independently of the candidate pool, the same compact trajectory can be reused to score additional candidate datasets with forward passes.
\end{itemize}

The rest of the paper is organized as follows. Section~\ref{sec:related_work}
positions TACS relative to targeted selection and gradient-matching methods;
Sections~\ref{sec:motivation} and~\ref{sec:methodology} derive and instantiate
the validation-induced scoring rule; Section~\ref{sec:empirical_study}
evaluates it across controlled prediction, vision, and instruction-tuning
settings; and Section~\ref{sec:discussion} summarizes the main theoretical
scope.

\section{Related Work}
\label{sec:related_work}

\paragraph{Targeted Instruction-Data Selection.}
The closest line of work selects instruction-tuning data for a specified downstream target
using a small proxy set. LESS~\cite{xia2024lessselectinginfluentialdata},
ToV~\cite{jain2025trainvalidationtovfast}, and
GIST~\cite{min2026gisttargeteddataselection} estimate target-aware utility from gradient,
perturbation, or low-rank geometric signals. Other recent LLM selectors use
mutual-information projection~\cite{dong2026greedy}, proxy-label distribution
matching~\cite{cheng2026taskaware}, reward or metric objectives~\cite{wu2025roserewardorienteddataselection, wang2025nice},
and neuron-activation structure~\cite{wang2026targetorientedpretrainingdataselection}.
TACS addresses the same targeted-selection setting, but changes the reference path: it
scores endpoint loss response along a target-induced trajectory rather than local utility
along a pool-induced path.

\paragraph{Data Attribution and Learning-Dynamics Signals.}
Classical data-attribution methods estimate how examples affect model behavior through
influence functions~\cite{pmlr-v70-koh17a} or scalable trajectory approximations such as
TracIn~\cite{pruthi2020estimatingtrainingdatainfluence} and
TRAK~\cite{park2023trakattributingmodelbehavior}. Separately, learning-dynamics methods use
training behavior to diagnose example difficulty, learnability, or label
quality~\cite{swayamdipta2020dataset, paul2023deeplearningdatadiet,
toneva2019empiricalstudyexampleforgetting, pleiss2020identifyingmislabeleddatausing}.
TACS also uses loss evolution as a utility signal, but evaluates it offline by forward
passes through a short target-induced trajectory.

\paragraph{Coresets, Gradient Matching, and Online Selection.}
Coreset and gradient-matching methods select compact subsets whose gradients approximate a
reference objective, including CRAIG~\cite{mirzasoleiman2020coresetsdataefficienttrainingmachine},
GradMatch~\cite{killamsetty2021gradmatchgradientmatchingbased}, and
GLISTER~\cite{killamsetty2021glistergeneralizationbaseddata}. Recent dynamic selectors
update data choices or weights during training, including optimizer-aware online
selection~\cite{wang2026twostageoptimizerawareonlinedata}, per-iteration pretraining
selection~\cite{wang2026opusefficientprincipleddata}, adaptive RL
curricula~\cite{yang2026gradaligngradientaligneddataselection}, and bilevel influence
learning such as BLISS~\cite{hao2026blisslightweightbilevelinfluence}. This line of work
supports the premise that useful subsets are not merely small or high-quality in isolation:
they should induce updates aligned with the objective one ultimately cares about. TACS
adopts this target-alignment view, but uses a target-proxy trajectory to expose candidate
utility before subset retraining.

\paragraph{Instruction-Data Quality and Capacity Control.}
Instruction-tuning performance often depends more on data quality and task fit than on raw
data volume alone~\cite{zhou2023limaalignment, chen2024alpagasustrainingbetteralpaca,
liu2024makesgooddataalignment, zhang2026bestinstructiontuningdatafit,
pang2026tokencleaningfinegraineddata, zhang2026understandingvaluablepreferencedata}. We use
LoRA~\cite{hu2021loralowrankadaptationlarge}, motivated by the low intrinsic dimension of
adaptation in large models~\cite{aghajanyan2020intrinsicdimensionalityexplainseffectiveness},
as a practical capacity bottleneck during the target warmup.

Appendix~\ref{app:extended_related_work} provides a more detailed comparison with targeted
selection, attribution, coreset, and instruction-data-quality methods.

\section{Motivation}
\label{sec:motivation}

\subsection{Problem Setting}
We study the problem of targeted data selection: identifying a subset $S \subseteq \Zpool$ from a large, generic candidate pool $\Zpool$ that maximizes a model's performance on a specific target task. Let $\D$ denote the target distribution; Appendix~\ref{app:notation} summarizes notation used throughout the paper. The goal is to choose $S$ to minimize the expected target risk:
\begin{equation}
    \min_{S \subseteq \Zpool, |S| \le N} R(\theta(S)) := \mathbb{E}_{z \sim \D} \left[ \ell(\theta(S); z) \right],
\end{equation}
where $\theta(S) = \operatorname*{arg\,min}_{\theta \in \Theta} R_S(\theta) =  \operatorname*{arg\,min}_{\theta \in \Theta} \frac{1}{|S|} \sum_{z \in S} \ell(\theta; z) $ is the trained model on $S$. Here, $\ell$ is a smooth loss function and $R_S$ denotes the empirical risk over $S$. In practice, $\D$ is unavailable, and we rely on a small validation set $\Zval$ sampled from the target task as a proxy.

\subsection{A Gradient Flow Formulation}

To build intuition, suppose the model parameters $\theta_t$ evolve according to the gradient flow induced by the selected subset $S$:
\begin{equation}
    \mathrm{d}\theta_t = -\eta_t \nabla R_S(\theta_t)\, \mathrm{d}t.
\end{equation}
By the chain rule, the evolution of the target risk $R(\theta_t)$ along this trajectory is:
\begin{equation}
    \mathrm{d}R(\theta_t) = \langle \nabla R(\theta_t), \mathrm{d}\theta_t \rangle 
    = -\eta_t \langle \nabla R(\theta_t), \nabla R_S(\theta_t) \rangle \, \mathrm{d}t.
\end{equation}
Integrating from $t=0$ to $T$ yields the macroscopic change in target risk:
\begin{align}
    R(\theta_T) - R(\theta_0) 
    &= -\int_0^T \eta_t \langle \nabla R(\theta_t), \nabla R_S(\theta_t) \rangle \, \mathrm{d}t,\\
    &= -\frac{1}{|S|} \sum_{z\in S} \underbrace{\int_0^T \eta_t \langle \nabla R(\theta_t), \nabla \ell(\theta_t; z) \rangle \, \mathrm{d}t}_{ \text{Utility }F(z, \mathcal{T}=\{\theta_t\})},
\end{align}
Thus, conditional on a fixed trajectory $\mathcal T$, the target-risk
reduction decomposes additively across examples. In the original problem,
however, $\mathcal T$ is itself induced by $S$, so each per-example
utility remains coupled to all other selected examples through the shared
trajectory.

Equation~(5) therefore reveals that subset selection is naturally a
coupled search over examples and the optimization trajectory they induce.
We use this to lift the problem from searching only over subsets $S$ to
an augmented view over subset--trajectory pairs $(S,\mathcal T)$, thereby
making the optimization geometry an explicit component of candidate
utility $F(z;\mathcal T)$. Following the optimization principle of
decoupling a hard joint search into easier subproblems~\cite{golub1973differentiation, razaviyayn2013unified},
if we have a rough estimate $\widehat{\mathcal T}$ of the trajectory
induced by an optimal subset, $\mathcal T(S^*)$, we can replace
$\mathcal T$ by $\widehat{\mathcal T}$ and reduce the problem to a
lower-dimensional selection problem over $S$.

\paragraph{Reference Path Bias.}

Existing attribution methods, such as LESS~\cite{xia2024lessselectinginfluentialdata} and ToV~\cite{jain2025trainvalidationtovfast}, approximate this quantity along a \textit{pool-induced trajectory} $\theta^{\text{pool}}_t$, where $\mathrm{d}\theta^{\text{pool}}_t = -\eta_t \nabla R_{\Zpool}(\theta^{\text{pool}}_t)\, \mathrm{d}t$; Appendix~\ref{app:inf} gives the path-integral derivation and its connection to classical influence functions. 
However, when the candidate pool is heterogeneous and contains potentially irrelevant signals, $\mathcal{T}^{\text{pool}}  = \{\theta^{\text{pool}}_t\}$ may deviate from the trajectory induced by the selected subset. Candidate utility is then evaluated along such potentially misaligned trajectory, leading to a \textit{reference path bias} (see Figure~\ref{fig:method_idea}). 

This raises a natural question: 

\begin{quote}
    \textit{Can we construct a reference trajectory that better approximates the dynamics induced by the optimal subset?}
\end{quote}

\subsection{The Validation-Induced Flow}

\paragraph{Core idea.}
The optimal subset $S^*$ from a heterogeneous candidate pool should induce training dynamics closer to the target task than to the raw pool. In optimization terms, training on $S^*$ should move parameters in directions similar to training on the target task itself. This assumption forms the theoretical bedrock of gradient-matching, coreset, and targeted selection literature~\cite{killamsetty2021gradmatchgradientmatchingbased, mirzasoleiman2020coresetsdataefficienttrainingmachine, wang2026twostageoptimizerawareonlinedata, pmlr-v162-mindermann22a}, which posits that a selected subset is optimal when its gradient vector field directionally aligns with the target's vector field:
\begin{equation}
    \nabla R_{S^*}(\theta) \approx \nabla R(\theta).
\end{equation}

Because the true target risk $R$ is inaccessible in practice, we approximate its geometry using the available target proxy, $R_{\mathrm{val}}$. Rather than relying on a noisy, pool-induced path to accidentally overlap with this target geometry, we actively construct a \textit{validation-induced flow} $\theta^{\text{val}}_t$ by training exclusively on $\Zval$:
\begin{equation}
    \mathrm{d}\theta^{\text{val}}_t = -\eta_t \nabla R_{\mathrm{val}}(\theta^{\text{val}}_t)\, \mathrm{d}t.
\end{equation}

Because $\Zval$ isolates the target signal, $\theta^{\text{val}}_t$ serves as a target-conditioned rough estimate of the optimal retraining dynamics $\mathcal T(S^*)$. As illustrated in Figure~\ref{fig:toy_logistic}, this validation flow structurally matches the eventual selected-subset retraining dynamics far better than a generic pool path. We therefore utilize this validation-induced path as a heuristic geometric anchor for the time-integral estimate, rather than as a strict claim about the exact optimal retraining trajectory. We explicitly define the boundary conditions of this assumption in Section~\ref{sec:limitations}.

\section{Methodology}
\label{sec:methodology}

TACS turns the geometric argument in Section~\ref{sec:motivation} into a deliberately simple selection rule. The central move is to reverse the usual attribution direction: instead of asking how each candidate changes target performance along a pool-trained path, we first let the target proxy define the path, then ask which candidates become easier under that target movement. This yields a three-stage pipeline: (1) learn a compact validation-induced trajectory, (2) score every candidate by its loss evolution along this fixed path, and (3) select the highest-scoring examples. Algorithm~\ref{alg:target_selection} summarizes the procedure.

\begin{algorithm}[ht!] 
    \caption{\textbf{T}arget-\textbf{A}ligned \textbf{C}andidate \textbf{S}election (TACS).}
    \label{alg:target_selection}
    \small 
    \begin{algorithmic}[1]
        \STATE \textbf{Input:} Val set $\Zval$, Pool $\Zpool$, Budget $N$
        \STATE \textbf{Initialize:} Base parameters $\theta_0$
        \STATE Select learning rate $\eta$ and epochs $T$ via cross-validation (App.~\ref{sec:appendix_hyperparameters})
        \STATE Generate target-aligned warmup trajectory on $\mathcal{Z}_{\val}$, $\mathcal{T}_{\val} = \{\theta^{\val}_0, ..., \theta^{\val}_T\}$, via Eq.~\ref{eq: gd_val}
        \FOR{each candidate $z \in \Zpool$}
            \STATE Compute normalized candidate score $s(z)$ via Eq.~\ref{eqn:norm_loss_gap}
        \ENDFOR
        \STATE \textbf{return} Subset $S \subset \Zpool$ maximizing total score (Eq.~\ref{eqn: topk})
    \end{algorithmic}
\end{algorithm}

\subsection{Validation-Induced Trajectory}
\label{sec:trajectory_construction}

The only training step in TACS is a short warmup on the target validation set $\Zval$. Starting from the base model $\theta_0$, we update parameters by gradient descent on validation risk:
\begin{equation}\label{eq: gd_val}
\theta_t = \theta_{t-1} - \eta \nabla_{\theta} R_{\val}(\theta_{t-1}), \qquad
R_{\val}(\theta)=\frac{1}{|\Zval|}\sum_{z\in\Zval}\ell(\theta;z).
\end{equation}
This produces the path $\mathcal{T}_{\val}=\{\theta^{\val}_0,\ldots,\theta^{\val}_T\}$. Importantly, the path is constructed without looking at $\Zpool$. The same warmup can therefore be reused for any candidate pool that will later be scored against the same target task and base model. 

\paragraph{Structural Regularization via LoRA.}
For microscopic validation sets, a validation-induced path is useful only if it captures task structure rather than memorizing a tiny validation set. We therefore construct the path through a strict Low-Rank Adaptation (LoRA) bottleneck~\cite{hu2021loralowrankadaptationlarge}. In instruction-tuning experiments, we use ultra-low-rank adapters (e.g., rank $r=1$), forcing the warmup to express broad directions of target adaptation instead of high-capacity instance fitting. This bottleneck also makes the saved checkpoints lightweight, enabling cheap offline storage of trajectories. 

\paragraph{Trajectory Calibration.}
We calibrate the learning rate \(\eta\) and trajectory length \(T\) using an \(M\)-fold
cross-validation heuristic on \(\Zval\). For each candidate configuration, the remaining
folds generate the warmup path, and the held-out fold is treated as target-like data to be
separated from a small generic negative reference sample. This negative sample need not be
drawn from the candidate pool later being scored. We choose \((\eta^*,T^*)\) by maximizing
rank-based AUROC between these two groups. This calibration favors trajectories whose
loss-drop signal transfers across validation folds, rather than trajectories that only fit
the examples used to construct the path. Full details are in Appendix~\ref{sec:appendix_hyperparameters}.

\subsection{Inverse Selection via Loss Evolution}

For the continuous-time validation warmup,
\begin{equation}
\frac{\mathrm{d}\theta_t^{\mathrm{val}}}{\mathrm{d}t} = -\eta_t \nabla R_{\mathrm{val}}(\theta_t^{\mathrm{val}}).
\end{equation}
By the chain rule, the endpoint loss drop of a candidate along this path is exactly the accumulated alignment between the candidate gradient and the validation update direction:
\begin{equation}
\ell(\theta_0^{\mathrm{val}}; z) - \ell(\theta_T^{\mathrm{val}}; z) = \int_0^T \eta_t \langle \nabla \ell(\theta_t^{\mathrm{val}}; z), \nabla R_{\mathrm{val}}(\theta_t^{\mathrm{val}}) \rangle \, \mathrm{d}t.
\end{equation}

Thus, if $\Zval$ is a faithful proxy for the target task
($\nabla R_{\mathrm{val}} \approx \nabla R$), the average loss drop
over any selected subset $S$ approximates the target-aligned integral from
Section~\ref{sec:motivation}:
\begin{align}
R_S(\theta_0^{\mathrm{val}}) - R_S(\theta_T^{\mathrm{val}}) 
&\approx \frac{1}{|S|} \sum_{z\in S} \int_0^T \eta_t \langle \nabla \ell(\theta_t^{\mathrm{val}}; z), \nabla R(\theta_t^{\mathrm{val}}) \rangle \, \mathrm{d}t \\
&= \frac{1}{|S|} \sum_{z\in S} F(z, \mathcal{T}=\{\theta^{\mathrm{val}}_t\}).
\end{align}

Under the two approximations above--the validation flow is a rough estimate of $\mathcal T(S^*)$, and the validation gradient estimates the target gradient--the macroscopic loss drop along this path becomes a forward-only proxy for how well a candidate responds to target-task learning. Operationally, this is the same quantity used to evaluate whether the retrained model has improved on the target proxy, applied in reverse to candidate examples. We therefore select candidates whose losses decrease most along the validation-induced path.

\subsection{Candidate Scoring and Selection}
\label{sec:compatibility_scoring}

Thus, we first measure the raw loss reduction each candidate experiences along the validation-induced trajectory:
\begin{equation}
\Delta \ell(z) = \ell(\theta_1^{\mathrm{val}}; z) - \ell(\theta_T^{\mathrm{val}}; z).
\end{equation}

We use $\theta_1^{\mathrm{val}}$ rather than $\theta_0^{\mathrm{val}}$ as the baseline because the first update can be dominated by task-agnostic effects such as formatting adaptation and simple pattern learning~\cite{frankle2020earlyphaseneuralnetwork}. Dropping this first step focuses the score on the more target-conditioned part of the path.

For token-level objectives (e.g., LLM SFT), prompt tokens are masked and the sample loss
is averaged over response-token positions \(\mathcal{A}(z)\):
\begin{equation}
\ell(\theta; z) =
\frac{1}{|\mathcal{A}(z)|}
\sum_{i\in\mathcal{A}(z)} -\log p(z_i \mid z_{<i}; \theta).
\end{equation}
Longer sequences can have systematically different token-level losses, so raw loss drops may introduce length bias~\cite{xia2024lessselectinginfluentialdata, wang2024rethinkingdatashapleydata, wu2025roserewardorienteddataselection}. Figure~\ref{fig:length_normalization_scatter} in the appendix provides a length diagnostic for the normalized score. We therefore use the relative loss reduction:
\begin{equation}
\label{eqn:norm_loss_gap}
s(z) = \frac{\ell(\theta_1^{\mathrm{val}}; z) - \ell(\theta_T^{\mathrm{val}}; z)}
{\max\{\ell(\theta_1^{\mathrm{val}}; z),\varepsilon\}}.
\end{equation}

Finally, we select the top-$N$ candidates:
\begin{equation}
\label{eqn: topk}
S^* = \arg\max_{S \subseteq \Zpool,\, |S| = N} \sum_{z \in S} s(z).
\end{equation}
Many recent selection pipelines incorporate diversity, coverage, or deduplication constraints during the final sampling stage~\cite{dong2026greedy, mirzasoleiman2020coresetsdataefficienttrainingmachine, killamsetty2021glistergeneralizationbaseddata}. We use strict top-$N$ selection to isolate the effect of the scoring metric and maintain a fair comparison with attribution baselines under the same budget. More sophisticated sampling strategies, as well as alternative candidate metrics, can be applied on top of the same validation warmup when diversity control or task-specific scoring is desired. All candidates are evaluated by forward passes through the same two saved checkpoints, so selection cost scales with scoring the pool but the warmup cost is independent of the pool size.

\section{Empirical Study}
\label{sec:empirical_study}

In this section, we evaluate TACS from controlled prediction tasks to instruction tuning. Full experimental settings are in Appendix~\ref{appendix: exp_detail}.

\subsection{Controlled Prediction Tasks}

\paragraph{Toy Logistic Mixture.}
We first study a controlled logistic mixture with known target and distractor components.
Figure~\ref{fig:toy_logistic} shows that TACS-selected subsets induce retraining trajectories closer to the validation-warmup path than to the noisy pool-warmup path. This path alignment is associated with lower target classification error than LESS and ToV in this setting.

\begin{figure}[t]
    \centering
    \includegraphics[width=\textwidth]{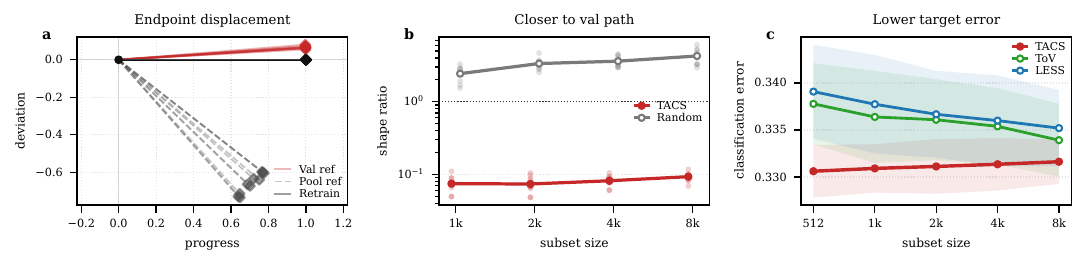}
    \caption{
    Toy logistic mixture. \textbf{(a)} Endpoint displacement directions for retraining, validation, and pool warmups projected to the 2D plane spanned by final retraining and validation displacements.
    \textbf{(b)} Shape-distance ratio; below 1 means retraining is closer to the validation path.
    \textbf{(c)} Better path alignment tracks lower target error.
    }
    \label{fig:toy_logistic}
\end{figure}

\paragraph{Vision Selection Is Robust under Noisy Candidate Pools.}
We next test TACS on binary CIFAR-10 target selection using an ImageNet-pretrained \texttt{ResNet-18}. The validation proxy contains 100 clean target samples, and the 10,000-image candidate pool is either clean or corrupted with 40\% label noise. After selecting \(k\) examples, we evaluate downstream retraining under both full fine-tuning, which updates all model parameters, and partial fine-tuning, which updates only the classification head. At \(k=500\), LESS performs better in the clean full-finetuning setting, but degrades sharply under label noise, while TACS retains substantially higher accuracy. Figure~\ref{fig:cv_noise_main} suggests that this robustness is associated with selected-set quality: under noise, TACS selects a much larger fraction of clean-label and target-distribution examples than LESS. Thus, the validation-induced trajectory acts as a task-conditioned filter rather than only selecting high-loss or high-gradient examples. 

\begin{figure}[t]
\centering
\begin{subfigure}[t]{0.49\linewidth}
    \centering
    \includegraphics[width=\linewidth]{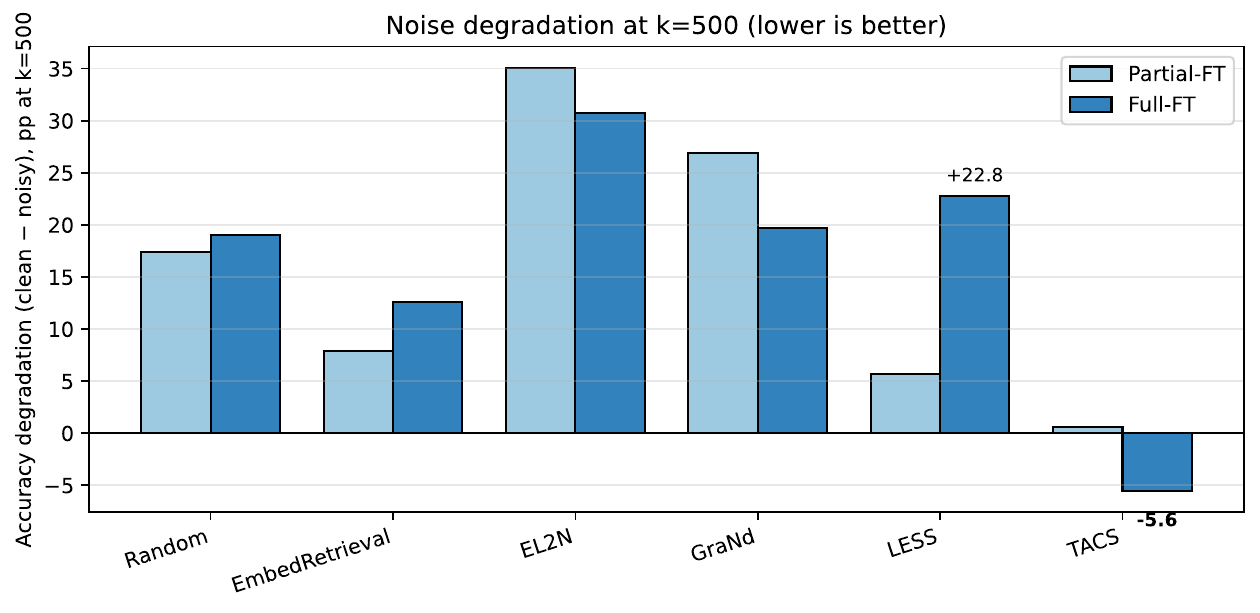}
    \caption{Noise drop}
\end{subfigure}
\hfill
\begin{subfigure}[t]{0.49\linewidth}
    \centering
    \includegraphics[width=\linewidth]{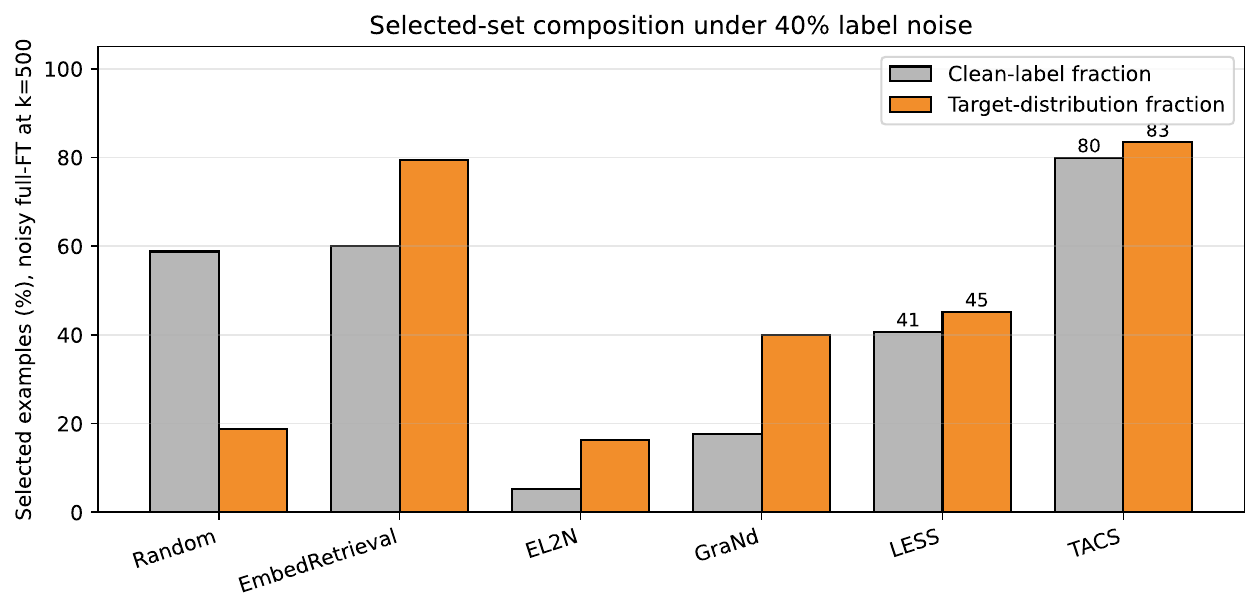}
    \caption{Selected quality}
\end{subfigure}
\caption{
Binary CIFAR-10 selection with \texttt{ResNet-18}. Full FT updates all parameters; partial FT updates layer4+head.
\textbf{(a)} Accuracy drop from clean to noisy pools at \(k=500\) (lower is better).
\textbf{(b)} Clean-label and target fractions under 40\% label noise (higher is better).
}
\label{fig:cv_noise_main}
\end{figure}

\subsection{Instruction Tuning}
\label{sec:instruction_tuning}

We evaluate targeted data selection for instruction tuning with \texttt{Llama-3.2-3B} on MMLU, BBH, and TyDiQA. Each task's development set is used as the validation proxy, while final performance is measured on the held-out test set. We select fixed-size subsets from Flan V2, Oasst1, COT, and Dolly, giving a cross-distribution setting where candidate pools may differ from the target task.
For baselines, we compare with random selection and two dynamic attribution methods, LESS~\cite{xia2024lessselectinginfluentialdata} and ToV~\cite{jain2025trainvalidationtovfast}. 

% ==========================================
% MAIN TABLE
% ==========================================
\begin{table}[t!]
\centering
\caption{
\textsc{Per-Source Instruction-Tuning Results}. Best/second-best are \textbf{bold}/\underline{underlined};
parentheses show gains over Random. Results are three-seed means, with SDs reported for LESS, ToV, and TACS.
}
\vspace{4pt}
\label{tab:main_results_llama3_wide}
\setlength{\tabcolsep}{4pt}
\renewcommand{\arraystretch}{1.4}
\resizebox{\textwidth}{!}{%
\begin{tabular}{@{}l ccc ccc ccc ccc@{}}
\toprule
& \multicolumn{3}{c}{\textbf{Random}}
& \multicolumn{3}{c}{\textbf{LESS}}
& \multicolumn{3}{c}{\textbf{ToV}}
& \multicolumn{3}{c}{\textbf{\cellcolor{gray!10}TACS (Ours)}} \\
\cmidrule(lr){2-4} \cmidrule(lr){5-7} \cmidrule(lr){8-10} \cmidrule(lr){11-13}
\textbf{Pool} & \textbf{MMLU} & \textbf{TyDiQA} & \textbf{BBH}
& \textbf{MMLU} & \textbf{TyDiQA} & \textbf{BBH}
& \textbf{MMLU} & \textbf{TyDiQA} & \textbf{BBH}
& \textbf{\cellcolor{gray!10}MMLU} & \textbf{\cellcolor{gray!10}TyDiQA} & \textbf{\cellcolor{gray!10}BBH} \\
\midrule
\textbf{COT}
& 52.36 & 56.80 & 45.09
& \makecell{56.48{\scriptsize$_{\pm0.6}$} \\ \scriptsize(+4.12)} & \makecell{62.54{\scriptsize$_{\pm1.2}$} \\ \scriptsize(+5.74)} & \makecell{\underline{48.40}{\scriptsize$_{\pm1.6}$} \\ \scriptsize(+3.31)}
& \makecell{\underline{57.21}{\scriptsize$_{\pm0.8}$} \\ \scriptsize(+4.85)} & \makecell{\underline{63.97}{\scriptsize$_{\pm0.5}$} \\ \scriptsize(+7.17)} & \makecell{47.25{\scriptsize$_{\pm0.8}$} \\ \scriptsize(+2.16)}
& \cellcolor{gray!10}\makecell{\textbf{57.22}{\scriptsize$_{\pm0.5}$} \\ \scriptsize(+4.86)} & \cellcolor{gray!10}\makecell{\textbf{64.96}{\scriptsize$_{\pm1.7}$} \\ \scriptsize(+8.16)} & \cellcolor{gray!10}\makecell{\textbf{49.44}{\scriptsize$_{\pm1.4}$} \\ \scriptsize(+4.35)} \\
\midrule
\textbf{Dolly}
& 56.25 & 50.44 & 46.02
& \makecell{\textbf{59.37}{\scriptsize$_{\pm0.2}$} \\ \scriptsize(+3.12)} & \makecell{59.10{\scriptsize$_{\pm2.4}$} \\ \scriptsize(+8.66)} & \makecell{\textbf{48.24}{\scriptsize$_{\pm1.1}$} \\ \scriptsize(+2.22)}
& \makecell{\underline{59.30}{\scriptsize$_{\pm0.1}$} \\ \scriptsize(+3.05)} & \makecell{\underline{65.40}{\scriptsize$_{\pm0.8}$} \\ \scriptsize(+14.96)} & \makecell{47.28{\scriptsize$_{\pm0.3}$} \\ \scriptsize(+1.26)}
& \cellcolor{gray!10}\makecell{59.26{\scriptsize$_{\pm0.3}$} \\ \scriptsize(+3.01)} & \cellcolor{gray!10}\makecell{\textbf{66.59}{\scriptsize$_{\pm1.8}$} \\ \scriptsize(+16.15)} & \cellcolor{gray!10}\makecell{\underline{48.06}{\scriptsize$_{\pm0.3}$} \\ \scriptsize(+1.63)} \\
\midrule
\textbf{Flan}
& 55.75 & 59.38 & 45.83
& \makecell{\underline{56.93}{\scriptsize$_{\pm0.4}$} \\ \scriptsize(+1.18)} & \makecell{61.60{\scriptsize$_{\pm1.6}$} \\ \scriptsize(+2.22)} & \makecell{47.04{\scriptsize$_{\pm1.3}$} \\ \scriptsize(+1.21)}
& \makecell{55.98{\scriptsize$_{\pm1.1}$} \\ \scriptsize(+0.23)} & \makecell{\underline{66.78}{\scriptsize$_{\pm0.4}$} \\ \scriptsize(+7.40)} & \makecell{\underline{47.62}{\scriptsize$_{\pm1.0}$} \\ \scriptsize(+1.79)}
& \cellcolor{gray!10}\makecell{\textbf{58.04}{\scriptsize$_{\pm0.9}$} \\ \scriptsize(+2.29)} & \cellcolor{gray!10}\makecell{\textbf{67.41}{\scriptsize$_{\pm1.8}$} \\ \scriptsize(+8.03)} & \cellcolor{gray!10}\makecell{\textbf{48.92}{\scriptsize$_{\pm0.3}$} \\ \scriptsize(+3.09)} \\
\midrule
\textbf{Oasst}
& 56.19 & 35.15 & 46.11
& \makecell{\underline{58.92}{\scriptsize$_{\pm0.1}$} \\ \scriptsize(+2.73)} & \makecell{51.37{\scriptsize$_{\pm2.9}$} \\ \scriptsize(+16.22)} & \makecell{46.30{\scriptsize$_{\pm1.8}$} \\ \scriptsize(+0.19)}
& \makecell{\textbf{59.37}{\scriptsize$_{\pm0.2}$} \\ \scriptsize(+3.18)} & \makecell{\underline{55.67}{\scriptsize$_{\pm2.3}$} \\ \scriptsize(+20.52)} & \makecell{\underline{48.06}{\scriptsize$_{\pm1.1}$} \\ \scriptsize(+1.95)}
& \cellcolor{gray!10}\makecell{58.52{\scriptsize$_{\pm0.4}$} \\ \scriptsize(+2.33)} & \cellcolor{gray!10}\makecell{\textbf{58.03}{\scriptsize$_{\pm1.8}$} \\ \scriptsize(+22.88)} & \cellcolor{gray!10}\makecell{\textbf{49.38}{\scriptsize$_{\pm1.2}$} \\ \scriptsize(+3.27)} \\
\midrule
\textbf{Average}
& 55.14 & 50.44 & 45.76
& \makecell{57.92{\scriptsize$_{\pm0.1}$} \\ \scriptsize(+2.78)} & \makecell{58.65{\scriptsize$_{\pm0.7}$} \\ \scriptsize(+8.21)} & \makecell{47.49{\scriptsize$_{\pm0.8}$} \\ \scriptsize(+1.73)}
& \makecell{\underline{57.96}{\scriptsize$_{\pm0.4}$} \\ \scriptsize(+2.82)} & \makecell{\underline{62.95}{\scriptsize$_{\pm0.7}$} \\ \scriptsize(+12.51)} & \makecell{\underline{47.55}{\scriptsize$_{\pm0.3}$} \\ \scriptsize(+1.79)}
& \cellcolor{gray!10}\makecell{\textbf{58.26}{\scriptsize$_{\pm0.3}$} \\ \scriptsize(+3.12)} & \cellcolor{gray!10}\makecell{\textbf{64.25}{\scriptsize$_{\pm0.9}$} \\ \scriptsize(+13.81)} & \cellcolor{gray!10}\makecell{\textbf{48.95}{\scriptsize$_{\pm0.5}$} \\ \scriptsize(+3.19)} \\
\bottomrule
\end{tabular}%
}
\end{table}

\paragraph{TACS Is Competitive across Heterogeneous Candidate Pools.}
Table~\ref{tab:main_results_llama3_wide} shows that TACS improves over random selection in every instruction-tuning cell and obtains the best average result on all three target tasks. These gains are not uniform: LESS and ToV remain stronger in several MMLU and BBH cells. The most consistent gains appear on TyDiQA, suggesting that the validation-induced path can provide a useful target-conditioned scoring signal under pool--target mismatch. Since this path is constructed from the target proxy rather than from a specific candidate pool, it can be reused across candidate sources under a fixed model, target task, and scoring configuration.

\begin{table}[t!]
    \centering
    \small 
    \renewcommand{\arraystretch}{1.1} 
    \setlength{\tabcolsep}{5pt}
    \caption{\textbf{Empirical Efficiency Comparison.} Runtime is measured in single H200 GPU hours for \texttt{Llama-3.2-3B} across 4 sources and 3 tasks. $d_f, d_b$ are forward/backward costs. $d_{\mathrm{proj}}$ is the gradient projection dimension. Calibration is the one-time hyperparameter search.}
    \label{tab:empirical_compute_cost}
    \vspace{4pt}
    \resizebox{\textwidth}{!}{%
    \begin{tabular}{llcccc}
    \toprule
    \textbf{Method} & \textbf{Metric} & \textbf{Calibration} & \textbf{Warmup (total)} & \textbf{Scoring (total)} & \textbf{Total Peak Storage} \\
    \midrule
    \textbf{LESS} & Measured & -- & 2.7 h & 24.0 h & 100.0 GB \\
    & \color{gray}Compl. & \color{gray}-- & \color{gray}$\mathcal{O}(T|\mathcal{Z}|d_b)$ & \color{gray}$\mathcal{O}(K|\mathcal{Z}|d_b)$ & \color{gray}$\mathcal{O}(K|\mathcal{Z}|d_{\mathrm{proj}} + K\dim(\phi_{128}))$ \\
    \midrule
    \textbf{ToV} & Measured & -- & 2.7 h & 13.5 h & 2.5 GB \\
    & \color{gray}Compl. & \color{gray}-- & \color{gray}$\mathcal{O}(T|\mathcal{Z}|d_b)$ & \color{gray}$\mathcal{O}(2K|\mathcal{Z}|d_f)$ & \color{gray}$\mathcal{O}(2K|\mathcal{Z}| + 2K\dim(\phi_{128}))$ \\
    \midrule
    \textbf{TACS} & \textbf{Measured} & \textbf{1.2 h} & \textbf{$< 1$ m} & \textbf{3.3 h} & \textbf{$< 10$ MB} \\
    & \color{gray}\textbf{Compl.} & \color{gray}-- & \color{gray}$\mathcal{O}(T|\mathcal{Z}_{\mathrm{val}}|d_b)$ & \color{gray}$\mathcal{O}(2|\mathcal{Z}|d_f)$ & \color{gray}$\mathcal{O}(2|\mathcal{Z}| + 2\dim(\phi_1))$ \\
    \bottomrule
    \end{tabular}%
    }
\end{table}

\paragraph{TACS Drastically Reduces Computational and Storage Overhead.} 
By fundamentally shifting warmup complexity from the candidate pool (\(\mathcal{O}(|\mathcal{Z}|)\)) to the validation proxy (\(\mathcal{O}(|\mathcal{Z}_{\mathrm{val}}|)\)), TACS slashes measured warmup from hours to under one minute. Utilizing ultra-low capacity adapters (\(r=1\)) and eliminating gradient caching drops peak storage by four orders of magnitude---from 100 GB (LESS) to \(<10\) MB. Because scoring requires only inexpensive forward passes at trajectory endpoints, evaluation is \(7\times\) faster than LESS and \(4\times\) faster than ToV. Finally, the 1.2-hour one-time calibration overhead is rapidly amortized by the trajectory's universal reusability across candidate pools.

\subsubsection{Ablation Studies}\label{sec:ablation_studies}

\paragraph{Low-Capacity Warmup Stabilizes Scores at a Potential Cost.}
Figure~\ref{fig:main_ablations}a studies the effect of the LoRA rank used during the validation warmup. In our setting, the rank-one warmup gives the strongest and most stable selection signal. This is consistent with the role of the validation warmup as a low-capacity probe: since the target proxy is small, larger ranks can make the trajectory more sensitive to idiosyncratic validation examples rather than exposing a more transferable target direction. Increasing the rank therefore adds expressivity, but does not yield a consistent downstream improvement in this experiment.

\paragraph{More Warmup Epochs Improve Selection Quality.}
Figure~\ref{fig:main_ablations}b evaluates how the number of validation-warmup epochs affects candidate scoring. Short warmups provide only a weak target-conditioned signal, while more epochs allow the model to move further along the target-induced path. Over the tested range, downstream performance improves with the number of warmup epochs, suggesting that TACS benefits from observing a more developed target trajectory.

\paragraph{Trajectory Scoring Is Not Just an Initial-Gradient Proxy.}
Figure~\ref{fig:main_ablations}c compares TACS with a first-order perturbation score computed at the base model,
\(\langle \nabla R_{\val}(\theta_0), \nabla_\theta \ell(\theta_0;z) \rangle\).
The two methods select substantially different examples, and trajectory-based scoring achieves stronger downstream performance in this setting. This indicates that TACS is not simply approximating an initial-gradient baseline; its signal depends on how candidate losses evolve after the model moves along the target-induced path.
% ==========================================
% ABLATION FIGURE (Placed at the very end to float properly)
% ==========================================
\begin{figure}[t!]
    \centering
    
    % Subfigure A: Rank Ablation
    \begin{subfigure}[b]{0.32\textwidth}
        \centering
        \includegraphics[width=\textwidth]{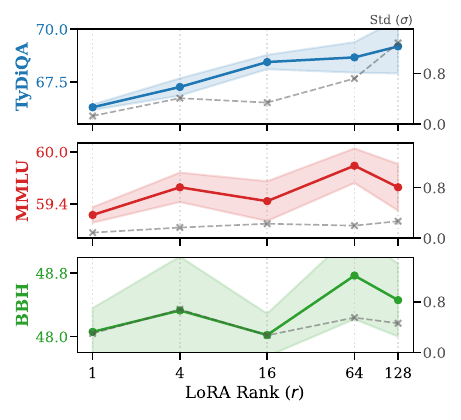}
        \caption{Quality-Stability Tradeoff}
        \label{fig:ablation_rank}
    \end{subfigure}
    \hfill
    % Subfigure B: Depth/Prefix Ablation
    \begin{subfigure}[b]{0.32\textwidth}
        \centering
        \includegraphics[width=\textwidth]{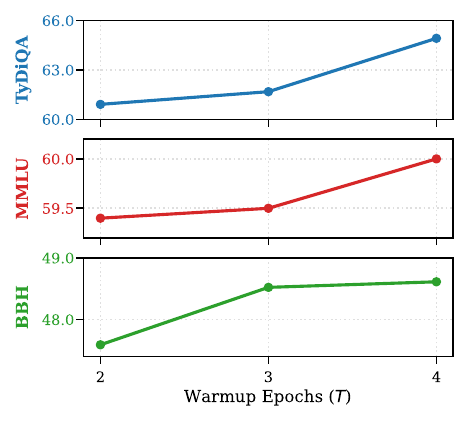}
        \caption{Trajectory Depth}
        \label{fig:ablation_depth}
    \end{subfigure}
    \hfill
    % Subfigure C: Base-Perturb
    \begin{subfigure}[b]{0.32\textwidth}
        \centering
        \includegraphics[width=\textwidth]{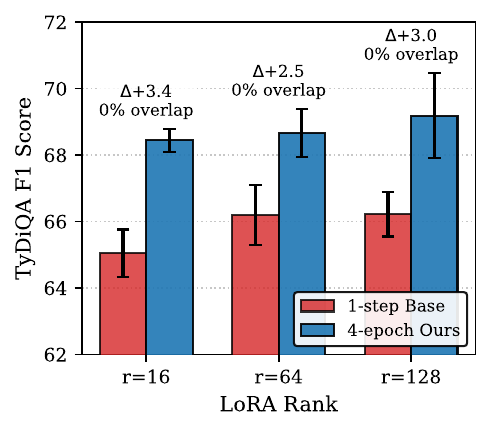}
        \caption{Trajectory vs. Base Perturbation}
        \label{fig:ablation_base}
    \end{subfigure}
    
    \caption{
    TyDiQA ablations. 
    \textbf{(a)} Warmup rank controls the quality-stability tradeoff. 
    \textbf{(b)} More warmup epochs improve selection. 
    \textbf{(c)} Trajectory scoring outperforms a base-model perturbation score. 
    Uncertainty is over warmup seeds only.
    }
    \label{fig:main_ablations}
\end{figure}

% Ablation text is currently included in sections/empirical.tex.

% Theoretical scope is discussed in Section~\ref{sec:discussion}; formal details are in Appendix~\ref{app:proof}.

\section{Discussion}
\label{sec:discussion}

\subsection{Scope and Theoretical Justification}
\label{sec:justification}

The scope of TACS depends on when an \emph{inverse} signal---candidate loss reduction along a target-induced trajectory---transfers to the downstream objective. The informal bound below shows that transfer is controlled by closeness between the selected-subset and target distributions.

\begin{theorem}[Cross-Distribution Risk Bound, Informal]
\label{thm:inf_cross-dist}
Let \(P_{\mathrm{tar}}\) be the target distribution and let \(P_S\) be the
distribution induced by a selected subset. Define the source-side improvement
\(\Delta(S):=R_{P_S}(\theta_0)-R_{P_S}(\theta^*_{P_{\mathrm{tar}}})\).
Under the convexity, smoothness, and data-Lipschitz conditions stated in
Appendix~\ref{app:cross_distribution_proof}, there exist constants
\(C_1,C_2>0\) such that
\begin{equation}
R_{P_{\mathrm{tar}}}(\theta_0)
-
R_{P_{\mathrm{tar}}}(\theta^*_{P_S})
\ge
\Delta(S)
-
C_1 W_1(P_S,P_{\mathrm{tar}})
-
C_2 W_1(P_S,P_{\mathrm{tar}})^2 .
\end{equation}
\end{theorem}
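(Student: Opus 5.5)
The plan is to write the target-side improvement as the source-side improvement plus two transfer errors, and to control each error by Kantorovich--Rubinstein duality. For brevity write \(\theta_S:=\theta^*_{P_S}\) and \(\theta_{\mathrm{tar}}:=\theta^*_{P_{\mathrm{tar}}}\). The decomposition is
\begin{align}
R_{P_{\mathrm{tar}}}(\theta_0)-R_{P_{\mathrm{tar}}}(\theta_S)
&= \Delta(S)
+\bigl[(R_{P_{\mathrm{tar}}}-R_{P_S})(\theta_0)-(R_{P_{\mathrm{tar}}}-R_{P_S})(\theta_{\mathrm{tar}})\bigr]\nonumber\\
&\quad+\bigl[R_{P_{\mathrm{tar}}}(\theta_{\mathrm{tar}})-R_{P_{\mathrm{tar}}}(\theta_S)\bigr].
\end{align}
The third bracket is at most zero, because \(\theta_{\mathrm{tar}}\) minimizes \(R_{P_{\mathrm{tar}}}\). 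The proof therefore has to show that the second bracket is at least \(-C_1 W_1\) and that the third bracket is at least \(-C_2 W_1^2\).

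\emph{Step 1 (first-order term).} I will use the data-Lipschitz condition in the form \(z\mapsto \ell(\theta_0;z)-\ell(\theta_{\mathrm{tar}};z)\) is \(L\)-Lipschitz. By Kantorovich--Rubinstein duality, the difference of its expectations under \(P_{\mathrm{tar}}\) and \(P_S\) is bounded in absolute value by \(L\,W_1(P_S,P_{\mathrm{tar}})\). This gives \(C_1=L\).

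If the appendix only assumes that \(\ell(\theta;\cdot)\) is \(L_z\)-Lipschitz uniformly in \(\theta\), then \(C_1=2L_z\) works directly. If it instead assumes that \(\nabla_\theta\ell(\theta;\cdot)\) is Lipschitz in \(z\), I will write the difference as an integral of gradients along the segment \([\theta_{\mathrm{tar}},\theta_0]\). That gives the bound \(L'\|\theta_0-\theta_{\mathrm{tar}}\|\,W_1\), which is absorbed into \(C_1\) using a bounded parameter domain.

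\emph{Step 2 (second-order term).} This step bounds the suboptimality of \(\theta_S\) on the target risk, and I expect it to be the main obstacle. The plan uses three facts in order.
\begin{enumerate}
\item \emph{Stationarity at \(\theta_S\).} Convexity makes \(\theta_S\) a stationary point of \(R_{P_S}\), so \(\nabla R_{P_S}(\theta_S)=0\).
\item \emph{Gradient transfer.} Applying duality to the \(z\)-Lipschitz map \(z\mapsto\nabla_\theta\ell(\theta_S;z)\), coordinatewise or via a vector-valued version, gives \(\|\nabla R_{P_{\mathrm{tar}}}(\theta_S)\|\le L_g W_1\).
\item \emph{Suboptimality bound.} Combine this small gradient with the geometry of \(R_{P_{\mathrm{tar}}}\) to bound \(R_{P_{\mathrm{tar}}}(\theta_S)-R_{P_{\mathrm{tar}}}(\theta_{\mathrm{tar}})\).
\end{enumerate}

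The difficulty is the last fact: smoothness alone bounds suboptimality by distance squared, not by gradient squared. I will first try assuming \(\mu\)-strong convexity of \(R_{P_{\mathrm{tar}}}\), which I expect is the intended reading of ``convexity''. The Polyak--Łojasiewicz inequality then gives suboptimality at most \(\|\nabla R_{P_{\mathrm{tar}}}(\theta_S)\|^2/(2\mu)\le L_g^2W_1^2/(2\mu)\), so \(C_2=L_g^2/(2\mu)\).

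As an alternative, strong convexity bounds the distance \(\|\theta_S-\theta_{\mathrm{tar}}\|\le L_gW_1/\mu\), and \(\beta\)-smoothness then gives \(C_2=\beta L_g^2/(2\mu^2)\). If the appendix has only plain convexity, I will work on a bounded domain of diameter \(D\). Convexity then gives suboptimality at most \(D L_g W_1\), which would be folded into \(C_1\) with \(C_2\) a formality; this weaker case is where I would check the appendix's precise assumptions.

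Summing the three pieces gives the stated inequality.
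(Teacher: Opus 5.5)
Your proposal is correct and follows the paper's proof essentially step for step. It uses the same add-and-subtract decomposition through $\theta^*_{P_{\mathrm{tar}}}$ and Kantorovich--Rubinstein duality for the zeroth-order transfer terms; your fallback $C_1=2L$ is exactly the paper's constant. It also uses stationarity of $\theta^*_{P_S}$ together with vector-valued duality to obtain $\|\nabla R_{P_{\mathrm{tar}}}(\theta^*_{P_S})\|_2\le G\,W_1(P_S,P_{\mathrm{tar}})$, as the paper does.

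Your ``alternative'' final step (strong convexity to bound $\|\theta^*_{P_S}-\theta^*_{P_{\mathrm{tar}}}\|_2$, then $\beta$-smoothness) is exactly the paper's route and gives $C_2=\beta G^2/(2\lambda^2)$. Your primary Polyak--\L{}ojasiewicz route is equally valid under the paper's $\lambda$-strong convexity. It gives the no-larger constant $C_2=G^2/(2\lambda)$, since $\lambda\le\beta$, and does not need smoothness at all.
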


The main implication is that the inverse loss-drop signal is useful when high-scoring subsets also
induce a distribution close to the target task. In this regime, increasing $\Delta(S)$ can translate into
downstream target improvement, up to a distribution-shift penalty. Conversely, when useful subsets
must draw heavily from far-from-target regions, the Wasserstein terms can dominate the bound, and a
large inverse loss drop need not imply downstream gain.

This suggests three practical scope conditions for TACS. First, the candidate pool should be sufficiently
diverse and high quality so that a target-close useful subset exists; the selection budget should also
remain moderate, since large budgets can force the selected distribution away from the target. Second,
the target-induced warmup should produce a meaningful movement in parameter space. If the
trajectory is nearly microscopic, reference-path bias becomes less important, and selection quality
may be dominated instead by static data representations. Third, TACS is most relevant when the
target distribution differs from the candidate-pool distribution; when they are already well aligned,
the reference-path bias that TACS aims to correct is less significant. We defer the formal statement,
proof, and further discussion of boundary conditions to Appendix~\ref{app:proof}.
\subsection{Conclusion}
We identify the reference trajectory as an important design choice in targeted data selection and propose TACS, a lightweight target-proxy trajectory for reusable endpoint-loss scoring. Our results suggest that deliberately designed reference paths can make data selection more robust and scalable under pool heterogeneity.

\bibliography{ref}
\appendix
\newpage
\section{Limitations}\label{sec:limitations}

While our target-aligned trajectory framework offers a highly scalable and modular approach to data selection, it operates under specific boundary conditions. We explicitly define these limitations below to clarify the scope, theoretical commitments, and empirical boundaries of our method.

\subsection{Dependence on Validation Proxy Fidelity}

Our framework inherently assumes that the microscopic validation set $\mathcal{Z}_{\mathrm{val}}$ provides a geometrically faithful representation of the true downstream target distribution. If $\mathcal{Z}_{\mathrm{val}}$ is too small or noisy to express the primary structural features of the target task, the resulting validation-induced flow will deviate from the optimal task geometry. In such zero-shot or severely misaligned regimes, careful data curation or improved target-proxy construction must take precedence over attribution methodology.

\subsection{Capacity Bottlenecks and Hyperparameter Sensitivity}

To prevent rapid memorization of the microscopic validation set, TACS relies on a capacity-limited warmup. While we employ LoRA as a highly effective, parameter-efficient capacity restrictor, the operative mechanism is the \emph{bottleneck itself}, not an intrinsic property of LoRA. Consequently, the method introduces sensitivities to specific hyperparameters, notably the bottleneck rank, warmup duration, and validation set size. While our experiments establish robust default configurations across multiple domains, optimal tuning may vary depending on the scale and complexity of the target task. A comprehensive, large-scale sensitivity analysis of these parameters across diverse architectures remains an important direction for future work.

\subsection{The Theory-Practice Gap in LLM Attribution}

Providing tight approximation bounds relative to ideal influence functions remains a notoriously open problem in non-convex optimization. Consequently, our theoretical discussion (Section~\ref{sec:motivation}) is not intended to provide a bounded approximation error for the complete, discrete training pipeline. Rather, we utilize continuous-time gradient flow to rigorously derive \emph{why} our zero-order heuristic (endpoint loss drop) effectively isolates target-aligned geometry. We do not analyze the discrete retraining dynamics post-selection, including finite-step optimizer effects, minibatch noise, or adapter-to-full-model mismatch. The theory serves to explain the mechanism of the TACS signal, while our empirical sections validate its survival under actual discrete training procedures.

\subsection{Boundary Conditions for Pool Reusability}

A key advantage of TACS is the $\mathcal{O}(1)$ scalability achieved by decoupling the warmup trajectory from the candidate pool, allowing a single trajectory to be reused. However, this claim of computational reusability holds a fundamental geometric boundary condition (formalized in Theorem~\ref{thm:inf_cross-dist}): the inverse scoring rule weakly transmits to downstream gains \emph{only if} the diverse candidate pool actually contains a meaningful subset of target-aligned data. If an unseen candidate pool is entirely out-of-distribution (i.e., the Wasserstein distance to the target is exceptionally large), the optimal parameter movements may be largely orthogonal to the validation-induced path. The bound is also most meaningful when the selection budget is tight enough that targeted filtering matters. If the entire pool already closely matches the target, or if final optimization is nearly infinitesimal, endpoint loss drops may contain little information beyond local gradient alignment.

\section{Broader Impacts}
\label{sec:broader_impacts}

Targeted data selection can lower the compute and data cost of adapting general-purpose models, making fine-tuning more accessible and reducing energy use. It also yields a transparent record of which examples were judged useful for a given target. The method inherits the values of the validation proxy \(\mathcal{Z}_{\val}\): if the proxy encodes biased or harmful behavior, selection may reinforce it, and the same efficiency gains could be used for harmful adaptation such as disinformation or surveillance. We release no new pretrained model or scraped corpus. The natural mitigation is to treat the validation proxy as a governance artifact by documenting its provenance and auditing selected subsets before retraining propagates fairness and safety constraints into the selected data.

\section{Extended Related Work}
\label{app:extended_related_work}

\begin{figure}[h]
\centering
\includegraphics[width=\textwidth]{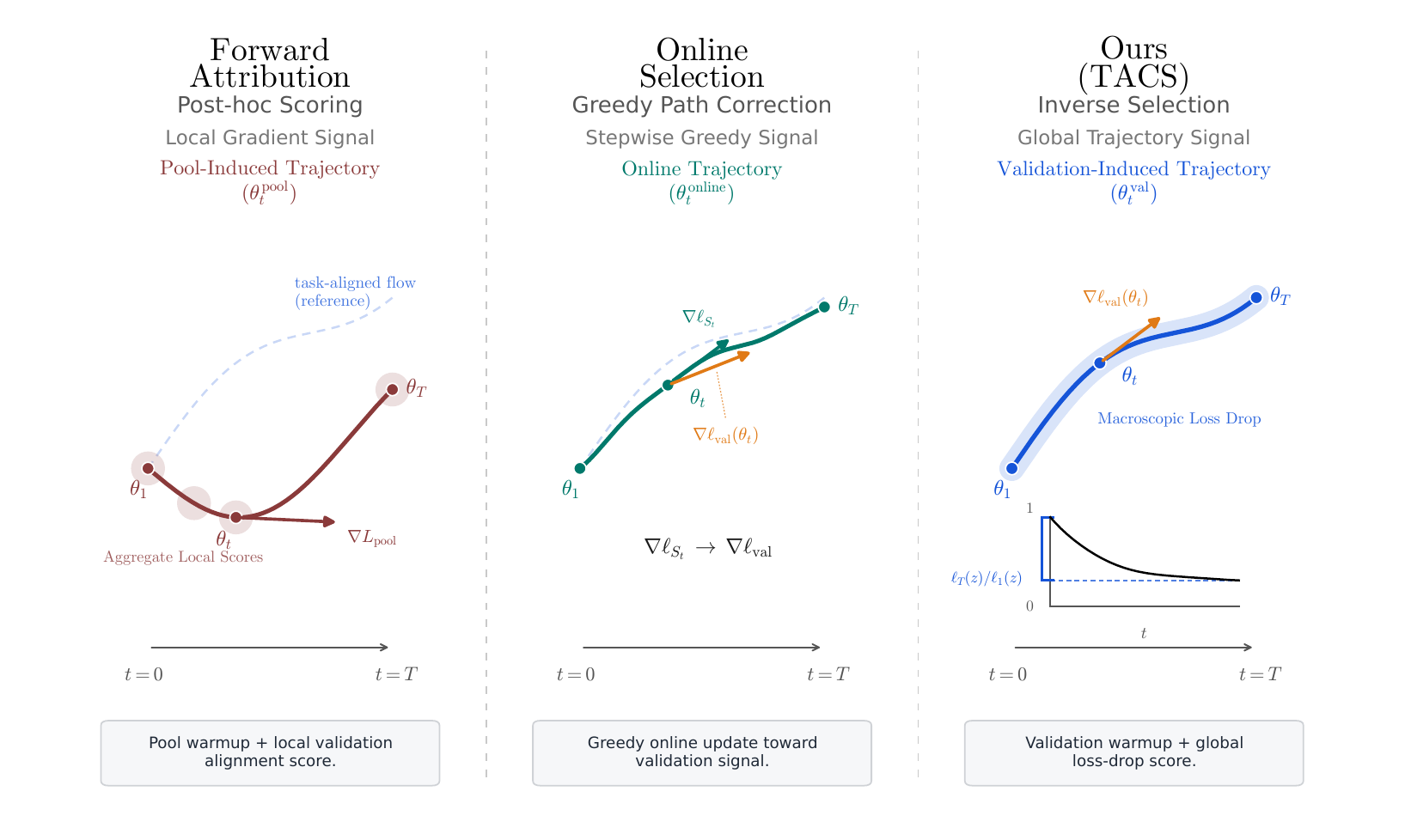}
\caption{Dynamic selection paradigms: forward attribution along a pool path, online gradient matching, and TACS inverse selection along a validation-induced path.}
\label{fig:method_comparison}
\end{figure}

Figure~\ref{fig:method_comparison} situates TACS against the two adjacent paradigms that recur throughout this section: forward attribution along a pool-induced trajectory and online gradient matching. The discussion below uses the same partition---nearest local-attribution baselines, target-signal variants, online and bilevel selectors, and orthogonal data-quality methods---and identifies in each case which of the three regimes a method belongs to.

\paragraph{Nearest Targeted Instruction-Selection Baselines.}
The closest comparisons to TACS are local attribution methods that score instruction-tuning candidates using a target proxy or validation set, corresponding to the left panel of Figure~\ref{fig:method_comparison}. \textbf{LESS}~\cite{xia2024lessselectinginfluentialdata} is the canonical baseline in this family: it computes validation-candidate gradient similarity along a warmup trajectory and aggregates the resulting local influence scores. \textbf{ToV}~\cite{jain2025trainvalidationtovfast} is close to LESS in objective rather than a separate conceptual family. It targets the same validation-conditioned selection problem but uses perturbation-based approximations to reduce the cost of estimating the local train-validation effect. \textbf{GIST}~\cite{min2026gisttargeteddataselection} is also adjacent to LESS: it introduces a geometry-aware, local low-rank estimate of the same kind of validation-candidate attribution signal. In this view, LESS, ToV, and GIST differ mainly in how they estimate or compress local target-aware attribution.

TACS differs from this family at the level of the scoring signal. Rather than estimating a local gradient or perturbation score at many reference points, TACS first constructs a target-induced trajectory and then measures each candidate's macroscopic endpoint loss drop along that path. The nearest-baseline question is therefore not only whether the attribution estimator is cheaper or lower rank, but whether candidate utility should be measured as a local validation-candidate interaction or as a zero-order response to target learning. This is also the main distinction from GIST: GIST uses low-rank geometry to estimate a local attribution score, whereas TACS uses a capacity-limited target trajectory to avoid local gradient scoring during pool evaluation.

Gradient-trajectory pursuit~\cite{deng2024influentiallanguagedataselection} is related in spirit because it treats language-data selection as a dynamic attribution problem. The difference again lies in where the trajectory comes from and what is stored. TACS intentionally decouples the warmup from the candidate pool, which makes the same target trajectory reusable across candidate sources for a fixed base model and target proxy.

\paragraph{Same Task, Different Target Signal.}
A second group studies targeted or task-aware LLM data selection, but uses signals that are not direct substitutes for TACS. \textbf{GIP}~\cite{dong2026greedy} formulates selection through information projection onto task queries, emphasizing information coverage and diversity. This is target-aware, but its utility is information-theoretic rather than a response to target-induced learning. \textbf{Task-Aware Distribution Matching}~\cite{cheng2026taskaware} uses proxy labels to match the joint distribution of selected data to the target. This is closest to TACS at the problem level, but it operationalizes target alignment as distribution matching; TACS operationalizes alignment as loss reduction under a validation-induced trajectory. \textbf{NICE}~\cite{wang2025nice} directly optimizes selection for non-differentiable evaluation metrics, and \textbf{ROSE}~\cite{wu2025roserewardorienteddataselection} uses reward-oriented signals for task-specific instruction tuning. These methods are metric- or reward-aware, whereas TACS is trajectory-aware: it does not require a task reward model or a differentiable surrogate for the final metric, only a small target proxy capable of inducing a useful warmup direction.

Training-free or weakly trained target-oriented methods occupy another adjacent but distinct space. \textbf{NAG-based ranking}~\cite{wang2026targetorientedpretrainingdataselection} uses neuron-activation graph structure for target-oriented pretraining-data selection. Its appeal is that it avoids full retraining, but its setting and signal differ from TACS: NAG targets pretraining selection with neuron-centric structure, while TACS targets downstream instruction selection with a reusable validation-induced path. Fit-based data selection, including work arguing that strong instruction-tuning data are those that fit the target model~\cite{zhang2026bestinstructiontuningdatafit}, is conceptually close because it also emphasizes target-model compatibility. TACS differs by measuring compatibility after the model has moved in the target direction. In this sense, fit-based methods ask whether an example already fits the model or target model, while TACS asks whether the example responds to learning the target.

\paragraph{Bilevel, Online, and Long-Horizon Selectors.}
Influence functions~\cite{pmlr-v70-koh17a}, TracIn~\cite{pruthi2020estimatingtrainingdatainfluence}, and TRAK~\cite{park2023trakattributingmodelbehavior} connect example utility to gradients, Hessian structure, and training trajectories. These methods provide the conceptual basis for path-dependent attribution, but their direct application to modern LLM selection is often expensive. Optimizer-aware online data selection~\cite{wang2026twostageoptimizerawareonlinedata}, OPUS~\cite{wang2026opusefficientprincipleddata}, BLISS~\cite{hao2026blisslightweightbilevelinfluence}, and GradAlign~\cite{yang2026gradaligngradientaligneddataselection} update or approximate utility during training. BLISS is especially relevant because it is validation-guided and bilevel in spirit, although it is positioned for pretraining rather than the few-shot targeted instruction-tuning regime studied here. GradAlign is also philosophically close because it uses a small trusted signal as a directional probe, but it studies RL fine-tuning rather than supervised instruction-data subset selection. TACS is offline after the target warmup: it borrows the idea that training dynamics expose utility, but avoids online selection loops, long-horizon bilevel optimization, and per-candidate gradient caching during scoring.

\paragraph{Coresets and Gradient Matching.}
CRAIG~\cite{mirzasoleiman2020coresetsdataefficienttrainingmachine}, GradMatch~\cite{killamsetty2021gradmatchgradientmatchingbased}, and GLISTER~\cite{killamsetty2021glistergeneralizationbaseddata} select subsets whose gradients approximate a full training or validation objective. These methods are important because they make the target-versus-pool distinction explicit: matching a pool statistic is not equivalent to following a target statistic when the pool is heterogeneous. TACS can be read as a zeroth-order, trajectory-level relaxation of target gradient matching. Instead of requiring explicit high-dimensional gradient matching at many candidate points, it uses the candidate's loss footprint along a target-induced path as a proxy for whether the candidate agrees with the target vector field.

\paragraph{Orthogonal Data-Quality and Fine-Grained Filtering.}
Several works identify generally useful, learnable, clean, or high-quality data without necessarily solving the same targeted subset-selection problem. Dataset Cartography~\cite{swayamdipta2020dataset}, forgetting events~\cite{toneva2019empiricalstudyexampleforgetting}, mislabeled-data diagnostics~\cite{pleiss2020identifyingmislabeleddatausing}, RHO-LOSS~\cite{pmlr-v162-mindermann22a}, and Data Diet/GraNd/EL2N-style pruning~\cite{paul2023deeplearningdatadiet} show that early learning dynamics can identify easy, hard, noisy, or influential examples. These methods are useful baselines in controlled vision experiments, where generic data-pruning signals are natural competitors, but they are not nearest neighbors for TACS in targeted instruction tuning because they do not condition the selection signal on a target validation trajectory. Token Cleaning~\cite{pang2026tokencleaningfinegraineddata} operates at a finer granularity, removing or selecting tokens rather than choosing a subset of examples for a target task. It is therefore orthogonal: token-level quality control could potentially be applied before or after TACS, but it does not replace target-conditioned example selection.

\paragraph{Instruction-Data Quality and Capacity Control.}
LIMA~\cite{zhou2023limaalignment}, AlpaGasus~\cite{chen2024alpagasustrainingbetteralpaca}, broad instruction-data quality analyses~\cite{liu2024makesgooddataalignment}, fit-based studies~\cite{zhang2026bestinstructiontuningdatafit}, token-level cleaning~\cite{pang2026tokencleaningfinegraineddata}, and preference-data analysis~\cite{zhang2026understandingvaluablepreferencedata} support the broader view that data quality and target compatibility can matter more than raw data volume in some instruction-tuning regimes. TACS contributes a target-conditioned scoring mechanism for this regime. Separately, LoRA~\cite{hu2021loralowrankadaptationlarge}, intrinsic-dimensionality results~\cite{aghajanyan2020intrinsicdimensionalityexplainseffectiveness}, and information-bottleneck perspectives~\cite{tishby2000informationbottleneckmethod} motivate restricting the capacity of the validation warmup. In TACS, the low-rank adapter is not the theoretical object of interest; it is a practical bottleneck that prevents a very small validation proxy from being memorized too quickly and makes target checkpoints cheap to store and reuse.

Influence-preserving proxy methods such as IProX~\cite{chen2026influencepreservingproxiesgradientbaseddata} are relevant to this implementation layer rather than to the main LESS/ToV/GIST comparison. IProX studies how to construct proxies that preserve gradient-based selection behavior while reducing cost. Such proxy or compression ideas could potentially be applied to the TACS warmup or capacity-restriction step, for example by improving the surrogate model or adapter used to generate the validation-induced path. This is complementary to our contribution: TACS changes the scoring signal to endpoint loss response, while IProX-style proxy design could be used to instantiate that path more efficiently or robustly. Recent surveys~\cite{Zhang_2025} provide a broader overview of LLM instruction-data selection beyond the targeted comparison emphasized here.

\section{Notation Table}
\label{app:notation}

\begin{table}[htbp]
\centering
\small
\caption{Notation used throughout the paper. We reserve calligraphic \(\Z\) for finite datasets or pools and calligraphic \(\D\) for population distributions.}
\label{tab:notation}
\setlength{\tabcolsep}{6pt}
\begin{tabular}{@{}ll@{}}
\toprule
\textbf{Symbol} & \textbf{Meaning} \\
\midrule
\(\Zpool\) or \(\Z\) & finite candidate pool from which examples are selected \\
\(\Zval\) & finite target validation/proxy set used to construct the warmup trajectory \\
\(\D\), \(\D_{\mathrm{tar}}\) & target population distribution underlying \(\Zval\) and test data \\
\(z\) & one candidate or validation example; for SFT, an input-output sequence pair \\
\(S \subseteq \Zpool\) & selected subset used for downstream retraining \\
\(N\) & selection budget, i.e., \(|S|=N\) \\
\(\theta_0, \theta_t, \theta_T\) & initial, intermediate, and final model parameters along a trajectory \\
\(\theta_t^{\val}\), \(\theta_t^{\mathrm{pool}}\) & validation-induced and pool-induced trajectory states \\
\(\mathcal{T}=\{\theta_t\}_{t=0}^T\) & optimization trajectory used for attribution or scoring \\
\(\eta_t\) & learning rate or continuous-time step size at time \(t\) \\
\(\ell(\theta;z)\) & per-example loss of parameters \(\theta\) on example \(z\) \\
\(R(\theta)\), \(R_{\val}(\theta)\), \(R_S(\theta)\) & target, validation empirical, and selected-subset risks \\
\(F(z,\mathcal{T})\) & path-integrated alignment utility of candidate \(z\) along \(\mathcal{T}\) \\
\(\Delta \ell(z)\) & endpoint loss drop \(\ell(\theta_1^{\val};z)-\ell(\theta_T^{\val};z)\) \\
\(s(z)\) & normalized TACS score, \(\Delta \ell(z)/\max\{\ell(\theta_1^{\val};z),\varepsilon\}\) \\
\(r,\alpha\) & LoRA rank and scaling factor used in adapter warmups \\
\(M,K\) & number of cross-validation folds and trajectory-depth candidates \\
\(W_1(\cdot,\cdot)\) & Wasserstein-1 distance used in the transfer bound \\
\bottomrule
\end{tabular}
\end{table}

\section{Omitted Theory: Influence as a Path Integral}
\label{app:proof}
\label{app:inf}

This appendix formalizes the path-integral view introduced in the main text. Rather than providing a complete theory of discrete fine-tuning, our goal is to mathematically isolate the source of \emph{reference-path bias} and demonstrate why classical influence functions fail to capture finite-time, early-stopping dynamics.

\subsection{Trajectory-Based Attribution}
\label{app:inf:tracin}

Let \(R_0\) be a reference objective that generates a training trajectory, and \(R\) be the target objective. Consider the learning-rate weighted gradient-flow trajectory:
\begin{equation}
\label{eq:R0_flow_appendix}
    \frac{d\theta_t}{dt} = -\eta_t \nabla R_0(\theta_t), \qquad \theta_0 \text{ fixed},
\end{equation}
where \(\eta_t\ge 0\). For a candidate example \(z\), we define the finite-time trajectory score as the continuous-time analogue of discrete attribution methods (e.g., TracIn, LESS):
\begin{equation}
\label{eq:path_score_appendix}
    \mathcal{I}_{R,R_0}^{T}(z) := \int_0^T \eta_t \left\langle \nabla R(\theta_t), \nabla_\theta \ell(\theta_t;z) \right\rangle \,dt .
\end{equation}

\begin{lemma}[Discrete-to-continuous limit]
\label{lem:discrete_continuous}
Suppose reference checkpoints are generated by \(\theta_{t+1} = \theta_t - \eta_t \nabla R_0(\theta_t)\), and the step sizes define a vanishing mesh over \([0,T]\). Under standard smoothness assumptions, the weighted discrete score \(\sum_t \eta_t \langle \nabla R(\theta_t), \nabla_\theta \ell(\theta_t;z) \rangle\) converges to \(\mathcal{I}_{R,R_0}^{T}(z)\).
\end{lemma}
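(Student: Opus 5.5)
Because the step sizes \(\eta_t\) weight the discrete sum, I read it as a left-endpoint Riemann sum of the integrand \(g(s) := \langle \nabla R(\theta(s)), \nabla_\theta \ell(\theta(s);z)\rangle\) along the continuous flow~\eqref{eq:R0_flow_appendix}. The proof then reduces to two facts. First, the Euler iterates stay uniformly close to the flow. Second, \(g\) is regular enough for Riemann sums to converge.

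To set up the mesh, put \(s_0 = 0\) and \(s_{k+1} = s_k + h_k\). The discrete learning rate \(\eta_k\) factors as \(\eta_k = \eta(s_k)\,h_k\), where \(\eta(\cdot)\) is the continuous schedule and \(h_k\) is the mesh width, with \(\max_k h_k \to 0\) and \(\sum_k h_k = T\). The recursion \(\theta_{k+1} = \theta_k - \eta(s_k) h_k \nabla R_0(\theta_k)\) is then exactly the explicit Euler scheme for \(\dot\theta = -\eta(s)\nabla R_0(\theta)\). The discrete score becomes \(\sum_k h_k\, \eta(s_k) \langle \nabla R(\theta_k), \nabla_\theta \ell(\theta_k;z)\rangle\).

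The "standard smoothness assumptions" I would impose are:
\begin{itemize}
\item \(\eta(\cdot)\) is continuous and bounded on \([0,T]\);
\item \(\nabla R_0\) is \(L_0\)-Lipschitz;
\item \(\nabla R\) and \(\nabla_\theta \ell(\cdot;z)\) are Lipschitz and bounded on a compact neighbourhood \(K\) of the flow's image.
\end{itemize}
Under these, the argument has three steps.

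\textbf{Step 1 (Euler error).} Let \(e_k = \|\theta_k - \theta(s_k)\|\) be the gap between iterate and flow. The local truncation error of one Euler step is \(O(h_k\,\omega(h_k))\), where \(\omega\) is the modulus of continuity of \(s\mapsto \eta(s)\nabla R_0(\theta(s))\); it is \(O(h_k^2)\) if \(\eta\) is Lipschitz. A discrete Grönwall argument then gives
\[
\max_k e_k \le e^{L_0 \|\eta\|_\infty T}\, T\,\omega(\max_k h_k) \to 0 .
\]
This uniform bound also keeps the iterates inside \(K\) once the mesh is fine enough, so the Lipschitz bounds apply to them.

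\textbf{Step 2 (integrand perturbation).} Write \(G(\theta) = \langle \nabla R(\theta), \nabla_\theta\ell(\theta;z)\rangle\). A product of bounded Lipschitz functions is Lipschitz on \(K\), say with constant \(L_G\). Hence
\[
\Big|\sum_k h_k\eta(s_k)\big(G(\theta_k) - G(\theta(s_k))\big)\Big| \le \|\eta\|_\infty T L_G \max_k e_k \to 0 .
\]

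\textbf{Step 3 (Riemann sum).} The function \(s\mapsto \eta(s)G(\theta(s))\) is continuous on \([0,T]\), so its left-endpoint Riemann sums converge to \(\mathcal{I}_{R,R_0}^{T}(z)\) as the mesh vanishes. Combining Steps 2 and 3 by the triangle inequality gives the claim. Under Lipschitz \(\eta\) the rate is \(O(\max_k h_k)\).

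The main obstacle is Step 1. The issue is making the "vanishing mesh" parametrization precise when the paper's \(\eta_t\) is used both as a discrete step and as a continuous weight. It also requires ensuring the iterates do not leave the region where the Lipschitz constants hold. I would handle the second point by a bootstrap: pick \(K\) as a closed \(1\)-neighbourhood of the flow image. Then argue inductively that as long as \(e_j \le 1\) for \(j\le k\), the Grönwall bound applies and forces \(e_{k+1} \le 1\) for small mesh. If global Lipschitzness of \(\nabla R_0\) is assumed instead, this localization step is unnecessary.
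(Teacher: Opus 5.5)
Your proposal is correct. The paper states this lemma with no proof beyond the phrase ``standard smoothness assumptions,'' and your argument is exactly the standard one that phrase relies on: factoring $\eta_k=\eta(s_k)h_k$ so the recursion is explicit Euler, controlling the global error by discrete Gr\"onwall with the bootstrap that keeps iterates in $K$, transferring that error to the integrand, and concluding by left-endpoint Riemann-sum convergence.

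Two small remarks. The reading in which the step sizes are themselves the mesh widths (time measured in accumulated learning rate) is your special case $\eta\equiv 1$. Also, for convergence alone, uniform continuity of $\theta\mapsto\langle\nabla R(\theta),\nabla_\theta\ell(\theta;z)\rangle$ on $K$ suffices in Step~2; Lipschitzness is needed only for the $O(\max_k h_k)$ rate.
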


Equation~\eqref{eq:path_score_appendix} explicitly separates two roles: the target \(R\) determines the vector field, while the reference \(R_0\) determines \emph{where} in parameter space this field is evaluated. Changing \(R_0\) inherently changes the attribution signal.

\subsection{Reference-Path Bias}
\label{app:inf:reference}

\begin{definition}[Path-induced attribution gap]
\label{def:path_gap}
Let \(R_a\) and \(R_b\) be two reference objectives with corresponding trajectories \(\theta_t^{(a)}\) and \(\theta_t^{(b)}\). For a candidate \(z\), the path-induced attribution gap is:
\begin{equation}
\label{eq:path_gap_appendix}
    \Gamma_T(z;R,R_a,R_b) := \mathcal{I}_{R,R_a}^{T}(z) - \mathcal{I}_{R,R_b}^{T}(z).
\end{equation}
\end{definition}

\textbf{Remark:} This gap formalizes \emph{reference-path bias}. If a pool-induced path (\(R_0=R_{\mathrm{pool}}\)) differs geometrically from the target distribution, it moves the model through regions where the target gradient is weak or dominated by pool-specific noise. TACS corrects this by anchoring the path to the target proxy (\(R_0=R_{\mathrm{val}}\)), ensuring candidates are scored in parameter regions actively shaped by the target signal.

\subsection{Extension to Adaptive Optimizers}
\label{app:inf:adaptive}

This structural path dependence persists under adaptive optimizers (e.g., Adam). In the small-step limit, Adam induces a preconditioned flow \(\frac{d\theta_t}{dt} = - \eta_t P_t^{-1} \widetilde{\nabla}R_0(\theta_t)\), yielding the score:
\begin{equation}
\label{eq:adaptive_score_appendix}
    \mathcal{I}_{R,R_0,\mathrm{opt}}^{T}(z) := \int_0^T \eta_t \left\langle \nabla R(\theta_t), \widetilde{g}(z,\theta_t) \right\rangle \,dt,
\end{equation}
where \(\widetilde{g}\) denotes the optimizer-transformed candidate direction, including both moment smoothing and the current preconditioner \(P_t^{-1}\). Thus the displayed score should be read as the optimizer-aware analogue of the gradient inner product, not as an unpreconditioned candidate gradient. While the optimizer warps the local geometry, the fundamental dependence on the trajectory induced by \(R_0\) remains intact.

\subsection{Connection to Classical Influence Functions}
\label{app:inf:classical}

We now demonstrate that finite-time path dependence reduces to the classical influence function \emph{only} in a long-time equilibrium limit. Let \(L(\theta;w) := \sum_{j=1}^n w_j \ell(\theta;z_j)\) be a weighted objective, with gradient flow \(\frac{d\theta_t}{dt} = - \eta_t \nabla_\theta L(\theta_t;w)\). We define the trajectory sensitivity as \(v_t := \frac{\partial \theta_t}{\partial w_i}\).

\begin{lemma}[Finite-time sensitivity equation]
\label{lem:sensitivity_ode}
Assuming \(L\) is twice continuously differentiable, \(v_t\) satisfies:
\begin{equation}
\label{eq:sensitivity_ode_appendix}
    \frac{d}{dt}v_t = - \eta_t H_t v_t - \eta_t \nabla_\theta \ell(\theta_t;z_i), \qquad v_0=0,
\end{equation}
where \(H_t:=\nabla_\theta^2 L(\theta_t;w)\).
\end{lemma}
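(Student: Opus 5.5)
Differentiate the gradient-flow ODE with respect to the weight \(w_i\) and exchange the order of differentiation. Write \(\theta_t=\theta_t(w)\) for the solution of \(\frac{d\theta_t}{dt}=-\eta_t\nabla_\theta L(\theta_t;w)\) with \(\theta_0\) fixed and independent of \(w\).

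\textbf{Step 1: differentiability.} Since \(L\) is \(C^2\), the vector field \(F(\theta,w,t):=-\eta_t\nabla_\theta L(\theta;w)\) is \(C^1\) in \((\theta,w)\). We assume \(\eta_t\) is continuous (or at least measurable and locally bounded). The classical theorem on differentiable dependence of ODE solutions on parameters (Peano/Gronwall) then gives that \(\theta_t(w)\) is \(C^1\) in \(w\) on the interval of existence. It also gives that \(v_t=\partial\theta_t/\partial w_i\) exists and is absolutely continuous in \(t\).

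\textbf{Step 2: interchange derivatives.} Write the flow in integral form:
\[
\theta_t=\theta_0-\int_0^t\eta_s\nabla_\theta L(\theta_s;w)\,ds.
\]
Differentiate under the integral in \(w_i\), which is justified by continuity of the integrand's derivative on compact sets via dominated convergence. By the chain rule,
\[
\partial_{w_i}\nabla_\theta L(\theta_s;w)=\nabla_\theta^2L(\theta_s;w)\,v_s+\partial_{w_i}\nabla_\theta L(\theta;w)\big|_{\theta=\theta_s}.
\]
Because \(L\) is linear in \(w\), the last term is exactly \(\nabla_\theta\ell(\theta_s;z_i)\). This yields
\[
v_t=-\int_0^t\eta_s\bigl(H_sv_s+\nabla_\theta\ell(\theta_s;z_i)\bigr)\,ds.
\]

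\textbf{Step 3: conclude.} The initial condition \(v_0=0\) holds because \(\theta_0\) does not depend on \(w\). Differentiating the integral identity in \(t\) gives the stated sensitivity ODE, almost everywhere, or everywhere if \(\eta_t\) is continuous.

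The only real obstacle is Step 1. The conclusion needs both existence of the solution on \([0,T]\) and the regularity to swap \(\partial_{w_i}\) with \(\int ds\). I would handle it by working on the compact tube around the trajectory, where \(\nabla^2 L\) and \(\partial_w\nabla L\) are bounded. On that tube, Gronwall's inequality applied to difference quotients \((\theta_t(w+he_i)-\theta_t(w))/h\) shows they converge uniformly to the solution of the linear variational equation. This is the standard variational-equation argument, and it is cleaner than citing it abstractly.
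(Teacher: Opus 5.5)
Your proposal is correct and follows the same route as the paper: differentiate the gradient flow in \(w_i\), apply the chain rule to \(\nabla_\theta L(\theta_t;w)\) (with linearity of \(L\) in \(w\) giving the source term \(\nabla_\theta\ell(\theta_t;z_i)\)), and get \(v_0=0\) from the \(w\)-independence of \(\theta_0\). Your Step 1 and the Gronwall/variational-equation argument make rigorous the exchange of \(\partial_{w_i}\) and \(d/dt\), which the paper's one-line proof takes for granted.
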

\begin{proof}
Differentiating the gradient flow with respect to \(w_i\) and applying the chain rule to \(\nabla_\theta L(\theta_t;w)\) yields the result. Since \(\theta_0\) is independent of \(w_i\), \(v_0=0\).
\end{proof}

\begin{lemma}[Variation-of-constants representation]
\label{lem:voc}
Let \(\Phi(t,s)\) be the state-transition operator for the homogeneous equation \(\frac{\partial}{\partial t}\Phi(t,s) = - \eta_t H_t\Phi(t,s)\) with \(\Phi(s,s)=I\). The finite-time sensitivity is:
\begin{equation}
\label{eq:vT_exact_appendix}
    v_T = - \int_0^T \Phi(T,s)\, \eta_s \nabla_\theta \ell(\theta_s;z_i) \,ds .
\end{equation}
\end{lemma}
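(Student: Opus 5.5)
Lemma~\ref{lem:voc} is a variation-of-constants formula for the linear inhomogeneous ODE of Lemma~\ref{lem:sensitivity_ode}. I would prove it by the standard integrating-factor argument, using the evolution operator \(\Phi(t,s)\) in place of a matrix exponential, since \(H_t\) is time-dependent.

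\textbf{Step 1: existence and properties of \(\Phi\).} By Lemma~\ref{lem:sensitivity_ode}, \(L\in C^2\), so \(t\mapsto\theta_t\) is continuous. Hence \(t\mapsto \eta_t H_t\) is continuous, assuming \(\eta_t\) is, or at least locally integrable. The linear matrix ODE \(\partial_t\Phi(t,s)=-\eta_tH_t\Phi(t,s)\), \(\Phi(s,s)=I\), then has a unique global solution on \([0,T]\) by Picard--Lindel\"of for linear systems; a Gr\"onwall bound rules out blow-up. From uniqueness I would derive two facts:
\begin{itemize}
\item the semigroup identity \(\Phi(t,r)\Phi(r,s)=\Phi(t,s)\);
\item invertibility, with \(\Phi(t,s)^{-1}=\Phi(s,t)\).
\end{itemize}
I would also derive the backward equation \(\partial_s\Phi(t,s)=\Phi(t,s)\,\eta_sH_s\). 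To get it, differentiate \(\Phi(t,s)\Phi(s,t)=I\) in \(s\) and use the forward equation for \(\Phi(s,t)\) in its first argument.

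\textbf{Step 2: integrating factor.} Fix \(T\) and set \(u_s:=\Phi(T,s)v_s\) for \(s\in[0,T]\). By the product rule, the backward equation, and Lemma~\ref{lem:sensitivity_ode},
\[
\frac{d}{ds}u_s=\Phi(T,s)\eta_sH_sv_s+\Phi(T,s)\bigl(-\eta_sH_sv_s-\eta_s\nabla_\theta\ell(\theta_s;z_i)\bigr)=-\Phi(T,s)\,\eta_s\nabla_\theta\ell(\theta_s;z_i).
\]
The Hessian terms cancel exactly.

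\textbf{Step 3: integrate.} Integrate from \(0\) to \(T\), using \(u_T=\Phi(T,T)v_T=v_T\) and \(u_0=\Phi(T,0)v_0=0\) (since \(v_0=0\)). This gives \(v_T=-\int_0^T\Phi(T,s)\eta_s\nabla_\theta\ell(\theta_s;z_i)\,ds\), which is the claim. The integrand is continuous in \(s\), so the integral is well defined.

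As a cross-check, differentiating the right-hand side in \(T\) via Leibniz's rule returns the sensitivity ODE. Uniqueness of solutions to that linear ODE then gives the same conclusion by a second route.

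\textbf{Main obstacle.} The only step that is not mechanical is justifying the backward equation for \(\Phi\), i.e., differentiability of \(\Phi(T,s)\) in its second argument. I would handle it through the inverse identity \(\Phi(T,s)=\Phi(s,T)^{-1}\) and the formula for the derivative of a matrix inverse. If \(\eta_t\) is only measurable, every "derivative" must be read almost everywhere, working with absolutely continuous solutions in the Carath\'eodory sense. The computation in Steps 2 and 3 is otherwise unchanged.
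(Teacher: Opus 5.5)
Your proof is correct and is what the paper intends. The paper states this lemma without a written proof and relies on the standard variation-of-constants (Duhamel) formula. Your integrating-factor computation with $u_s=\Phi(T,s)v_s$ is exactly that derivation (equivalently, the ansatz $v_t=\Phi(t,0)c(t)$): the Hessian terms cancel and the boundary values $u_T=v_T$, $u_0=0$ give the claim.

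Your Leibniz-rule cross-check is in fact a self-contained second proof. It needs only the forward equation plus uniqueness for the sensitivity ODE, and never differentiates $\Phi$ in its second argument, so it sidesteps the backward-equation regularity issue you flag as the main obstacle.
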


\textbf{Remark on Path-Dependent Transport:} Lemma~\ref{lem:voc} proves that a perturbation injected by candidate \(z_i\) at time \(s\) is transported to time \(T\) by \(\Phi(T,s)\), which depends entirely on the Hessian field \emph{along the specific reference trajectory}. Consequently, a path driven by a noisy candidate pool will inherently transport the attribution signal through irrelevant subspace directions, corrupting the final score.

\begin{proposition}[Classical influence as the equilibrium limit]
\label{prop:classical_limit}
Suppose the flow converges to a stationary point \(\theta^*\), \(H:=\nabla_\theta^2 L(\theta^*;w)\) is positive definite, and \(\int_0^\infty \eta_t\,dt=\infty\). Then:
\begin{equation}
\label{eq:limit_vT_appendix}
    \lim_{T\to\infty} v_T = - H^{-1}\nabla_\theta \ell(\theta^*;z_i).
\end{equation}
This matches the classical influence of upweighting \(z_i\) at \(\theta^*\).
\end{proposition}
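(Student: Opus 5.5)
The plan is to work directly from the variation-of-constants representation in Lemma~\ref{lem:voc} and pass to the limit \(T\to\infty\). Write \(g^*:=\nabla_\theta \ell(\theta^*;z_i)\) and \(H_t=\nabla_\theta^2 L(\theta_t;w)\). The first step is a time change \(\tau(t)=\int_0^t \eta_s\,ds\). Since \(\int_0^\infty\eta_t\,dt=\infty\), this maps \([0,\infty)\) onto \([0,\infty)\). In the new clock the sensitivity equation of Lemma~\ref{lem:sensitivity_ode} becomes the autonomous-looking system
\[
\frac{d v}{d\tau} = -\widetilde H_\tau v - \widetilde g_\tau ,
\]
where \(\widetilde H_\tau\to H\) and \(\widetilde g_\tau\to g^*\) as \(\tau\to\infty\). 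Both limits follow from \(\theta_t\to\theta^*\) and the \(C^2\) assumption on \(L\) (continuity of \(\nabla^2 L\) and \(\nabla\ell\)). So the problem reduces to showing that a linear ODE with asymptotically constant, positive definite coefficient matrix and asymptotically constant forcing has solutions converging to the unique equilibrium \(-H^{-1}g^*\).

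The second step is to set \(u:=v+H^{-1}g^*\) and derive
\[
\frac{du}{d\tau} = -\widetilde H_\tau u + r_\tau, \qquad r_\tau := (\widetilde H_\tau - H)H^{-1}g^* - (\widetilde g_\tau - g^*),
\]
with \(r_\tau\to 0\). The third step is the key contraction estimate. Let \(\lambda>0\) be the smallest eigenvalue of \(H\). By continuity of the Hessian, there is \(\tau_0\) such that \(\widetilde H_\tau\succeq (\lambda/2) I\) for all \(\tau\ge\tau_0\). Note that \(\widetilde H_\tau\) is symmetric, so this is a bound on its symmetric part. Then
\[
\frac{d}{d\tau}\tfrac12\|u\|^2 \le -\tfrac{\lambda}{2}\|u\|^2 + \|r_\tau\|\,\|u\| ,
\]
which gives
\[
\frac{d}{d\tau}\|u\| \le -\tfrac{\lambda}{2}\|u\|+\|r_\tau\|
\]
wherever \(u\neq 0\). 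Grönwall then yields
\[
\|u_\tau\|\le e^{-\lambda(\tau-\tau_0)/2}\|u_{\tau_0}\| + \int_{\tau_0}^{\tau} e^{-\lambda(\tau-s)/2}\|r_s\|\,ds .
\]
The first term vanishes as \(\tau\to\infty\). The second is the convolution of an integrable kernel with a function tending to zero, so it also tends to zero. We split the integral at \(\tau/2\) and use boundedness of \(r\) on the early part together with smallness of \(r\) on the late part. Since \(\tau(T)\to\infty\), this gives \(v_T\to -H^{-1}g^*\). Equivalently, the transport operator \(\Phi(T,s)\) of Lemma~\ref{lem:voc} behaves like \(e^{-H(\tau(T)-\tau(s))}\), and \(\int_0^\infty e^{-H\sigma}d\sigma = H^{-1}\).

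Finally, the identification with classical influence is the standard implicit-function computation. At the stationary point, \(\nabla_\theta L(\theta^*(w);w)=0\). Differentiating in \(w_i\) gives \(H\,\partial\theta^*/\partial w_i + g^*=0\), which matches the limit. The main obstacle is the uniform positivity step. It needs \(\theta_t\) to actually enter and stay in a neighbourhood where the Hessian is uniformly positive definite, which follows from convergence to \(\theta^*\). It also needs \(u_{\tau_0}\) to be finite, which follows from linear growth bounds on \([0,\tau_0]\) given continuity of \(H_t\) and \(\eta_t\). I would state these regularity conditions explicitly, namely \(\eta\) locally integrable and \(L\in C^2\), rather than rely on the informal phrase in the statement.
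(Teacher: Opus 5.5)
Your proposal is correct and follows the same route as the paper. Both use the time change $\tau(t)=\int_0^t\eta_s\,ds$, reduce to the limiting linear system $dv/d\tau=-Hv-\nabla_\theta\ell(\theta^*;z_i)$, get stability from $H\succ 0$, and match the result to the first-order condition of the perturbed minimizer. The paper only asserts that the non-autonomous system inherits the stable equilibrium of its limit. Your shifted variable $u=v+H^{-1}g^*$, Gr\"onwall contraction, and convolution estimate for $r_\tau\to 0$ supply the rigorous argument for that step.
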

\begin{proof}
Near \(\theta^*\), \(H_t\to H\). Under the time change \(\tau(t) = \int_0^t \eta_s\,ds\), the asymptotic dynamics become \(\frac{d}{d\tau}v_\tau = - Hv_\tau - \nabla_\theta\ell(\theta^*;z_i)\). Since \(H\succ 0\), this linear system has the unique stable equilibrium \(v_\infty = - H^{-1}\nabla_\theta\ell(\theta^*;z_i)\), which is exactly the first-order condition of the perturbed minimizer.
\end{proof}

\textbf{Discussion: Equilibrium vs. Early Stopping in Modern LLMs.} 
Proposition~\ref{prop:classical_limit} establishes that if models were trained to strict convergence (\(t \to \infty\)), the choice of optimization path would theoretically vanish, leaving only the local geometry of the optimum. However, modern deep learning—particularly LLM instruction-tuning—operates strictly in the finite-time, early-stopping regime. Because the path-independent limit in Eq.~\eqref{eq:limit_vT_appendix} is never reached, the sensitivity remains heavily path-dependent through the operators \(\Phi(T,s)\). This formalizes the necessity of TACS: since attribution is inextricably linked to the finite-time integration path, that path must be explicitly conditioned on the target task (\(R_0 = R_{\mathrm{val}}\)) to isolate useful geometry.

\subsection{Proof of Theorem~\ref{thm:inf_cross-dist}}
\label{app:cross_distribution_proof}

We state a formal version of the cross-distribution transfer bound. Let \(P\) denote
the target distribution and \(P'\) denote the selected-subset distribution.

\begin{theorem}[Formal Wasserstein Cross-Distribution Risk Bound]
\label{thm:formal_wasserstein}
Let \(\theta_P^*\) and \(\theta_{P'}^*\) be minimizers of \(R_P\) and \(R_{P'}\),
respectively. Assume that on a neighborhood containing \(\theta_0\), \(\theta_P^*\),
and \(\theta_{P'}^*\): (i) \(R_P\) is \(\lambda\)-strongly convex and \(\beta\)-smooth;
(ii) for every fixed \(\theta\), the loss \(\ell(\theta;z)\) is \(L\)-Lipschitz in
\(z\); and (iii) for every fixed \(\theta\), the gradient \(\nabla_\theta\ell(\theta;z)\)
is \(G\)-Lipschitz in \(z\). Define the inverse transfer signal
\begin{equation}
\Delta := R_{P'}(\theta_0)-R_{P'}(\theta_P^*)>0.
\end{equation}
Then
\begin{equation}
R_P(\theta_0)-R_P(\theta_{P'}^*)
\ge
\Delta
-2L W_1(P,P')
-\frac{\beta G^2}{2\lambda^2}W_1(P,P')^2 .
\end{equation}
In particular, when \(W_1(P,P')\) is small, the reverse target-task improvement is
lower bounded by the inverse loss reduction \(\Delta\), up to a Wasserstein
distribution-shift penalty.
\end{theorem}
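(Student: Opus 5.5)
The plan is to split the target-side improvement at the target optimum \(\theta_P^*\):
\begin{equation}
R_P(\theta_0)-R_P(\theta_{P'}^*)
=
\bigl[R_P(\theta_0)-R_P(\theta_P^*)\bigr]
-
\bigl[R_P(\theta_{P'}^*)-R_P(\theta_P^*)\bigr].
\end{equation}
I will lower bound the first bracket by \(\Delta\) minus a linear Wasserstein term. I will upper bound the second bracket, which is an excess-risk term, by a quadratic Wasserstein term. These two pieces correspond exactly to the two penalties in Theorem~\ref{thm:formal_wasserstein}.

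\textbf{Step 1 (first bracket).} Fix \(\theta\). Assumption (ii) says \(z\mapsto \ell(\theta;z)\) is \(L\)-Lipschitz, so Kantorovich--Rubinstein duality gives
\begin{equation}
|R_P(\theta)-R_{P'}(\theta)|\le L\,W_1(P,P').
\end{equation}
Apply this at \(\theta_0\) and at \(\theta_P^*\). This yields \(R_P(\theta_0)-R_P(\theta_P^*)\ge R_{P'}(\theta_0)-R_{P'}(\theta_P^*)-2LW_1=\Delta-2LW_1\). This step is routine.

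\textbf{Step 2 (parameter shift).} I want \(\|\theta_{P'}^*-\theta_P^*\|\le G W_1/\lambda\). By \(\lambda\)-strong convexity of \(R_P\) and \(\nabla R_P(\theta_P^*)=0\),
\begin{equation}
\lambda\|\theta_{P'}^*-\theta_P^*\|\le \|\nabla R_P(\theta_{P'}^*)-\nabla R_P(\theta_P^*)\|=\|\nabla R_P(\theta_{P'}^*)\|.
\end{equation}
Since \(\theta_{P'}^*\) minimizes \(R_{P'}\), I can write \(\|\nabla R_P(\theta_{P'}^*)\|=\|\nabla R_P(\theta_{P'}^*)-\nabla R_{P'}(\theta_{P'}^*)\|\). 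To bound this vector difference, I pick a unit vector \(u\) attaining the norm. By assumption (iii), the scalar map \(z\mapsto\langle u,\nabla_\theta\ell(\theta;z)\rangle\) is \(G\)-Lipschitz. Kantorovich--Rubinstein again, after exchanging gradient and expectation, gives the bound \(G\,W_1(P,P')\).

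\textbf{Step 3 (second bracket).} By \(\beta\)-smoothness of \(R_P\) and \(\nabla R_P(\theta_P^*)=0\),
\begin{equation}
R_P(\theta_{P'}^*)-R_P(\theta_P^*)\le \tfrac{\beta}{2}\|\theta_{P'}^*-\theta_P^*\|^2\le \tfrac{\beta G^2}{2\lambda^2}W_1(P,P')^2.
\end{equation}
Combining this with Step 1 gives the claimed inequality.

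\textbf{Main obstacle.} I expect the difficulty to lie in Step 2's technical hypotheses rather than in the algebra. Three things must hold.
\begin{itemize}
\item The first-order condition \(\nabla R_{P'}(\theta_{P'}^*)=0\) requires \(\theta_{P'}^*\) to be an interior stationary point of the neighborhood.
\item Differentiation must be exchanged with expectation under both \(P\) and \(P'\).
\item Strong convexity and smoothness must be applied on a neighborhood that is convex and contains the segment between \(\theta_{P'}^*\) and \(\theta_P^*\), which is where the gradient-monotonicity and quadratic upper bounds are used. Note that \(R_{P'}\) itself needs no convexity.
\end{itemize}
I would handle these by reading ``neighborhood'' as an open convex set on which all three assumptions hold, and by assuming enough integrability for dominated differentiation.
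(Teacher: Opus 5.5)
Your proposal is correct and follows the paper's proof essentially step for step. It uses the same split at \(\theta_P^*\), Kantorovich--Rubinstein at \(\theta_0\) and \(\theta_P^*\) for the \(2LW_1\) term, strong convexity together with \(\nabla R_{P'}(\theta_{P'}^*)=0\) to get \(\|\theta_{P'}^*-\theta_P^*\|\le GW_1/\lambda\), and \(\beta\)-smoothness for the quadratic term. Your unit-vector reduction of the vector-valued gradient difference to a scalar \(G\)-Lipschitz test function justifies more carefully the step the paper simply attributes to Kantorovich--Rubinstein duality. The regularity caveats you list (interior stationarity, differentiation under the integral, a convex neighborhood) are ones the paper leaves implicit.
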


\begin{proof}
We first relate the target improvement toward \(\theta_P^*\) to the inverse transfer
signal \(\Delta\). By adding and subtracting risks under \(P'\), we have
\begin{align}
R_P(\theta_0)-R_P(\theta_P^*)
&=
\Delta
+\bigl[R_P(\theta_0)-R_{P'}(\theta_0)\bigr]
-\bigl[R_P(\theta_P^*)-R_{P'}(\theta_P^*)\bigr].
\end{align}
Since \(\ell(\theta;z)\) is \(L\)-Lipschitz in \(z\), the Kantorovich--Rubinstein
duality implies that, for any fixed \(\theta\),
\begin{equation}
\left|R_P(\theta)-R_{P'}(\theta)\right|\le L W_1(P,P').
\end{equation}
Therefore,
\begin{equation}
R_P(\theta_0)-R_P(\theta_P^*) \ge \Delta-2L W_1(P,P').
\end{equation}

It remains to account for the optimizer mismatch between \(\theta_P^*\) and
\(\theta_{P'}^*\). Since \(\nabla R_{P'}(\theta_{P'}^*)=0\),
\begin{align}
\|\nabla R_P(\theta_{P'}^*)\|_2
&=
\|\nabla R_P(\theta_{P'}^*)-\nabla R_{P'}(\theta_{P'}^*)\|_2 \\
&\le G W_1(P,P'),
\end{align}
where the last step again follows from Kantorovich--Rubinstein duality applied to the
\(G\)-Lipschitz map \(z\mapsto\nabla_\theta\ell(\theta;z)\). By \(\lambda\)-strong
convexity of \(R_P\),
\begin{equation}
\|\theta_{P'}^*-\theta_P^*\|_2
\le
\frac{1}{\lambda}\|\nabla R_P(\theta_{P'}^*)\|_2
\le
\frac{G}{\lambda}W_1(P,P').
\end{equation}
Using \(\beta\)-smoothness of \(R_P\) and \(\nabla R_P(\theta_P^*)=0\),
\begin{equation}
R_P(\theta_{P'}^*)-R_P(\theta_P^*)
\le
\frac{\beta}{2}\|\theta_{P'}^*-\theta_P^*\|_2^2
\le
\frac{\beta G^2}{2\lambda^2}W_1(P,P')^2 .
\end{equation}
Finally,
\begin{equation}
R_P(\theta_0)-R_P(\theta_{P'}^*)
=
\bigl[R_P(\theta_0)-R_P(\theta_P^*)\bigr]
-\bigl[R_P(\theta_{P'}^*)-R_P(\theta_P^*)\bigr].
\end{equation}
Substituting the two bounds above yields the claim.
\end{proof}

\section{Experimental Details}\label{appendix: exp_detail}
\label{app:additional_results}

\subsection{Controlled Logistic Regression Details}
\label{app:logreg}

We use logistic-regression experiments~\cite{cox1958regression} to isolate the geometry of
target-conditioned data selection from the engineering details of instruction tuning. For a
parameter vector \(\theta\in\mathbb{R}^d\), examples are drawn from
\[
    x\sim \mathcal{N}(0,I_d),\qquad
    \Pr(y=1\mid x)=\sigma(\langle x,\theta\rangle),
    \qquad \sigma(t)=\frac{1}{1+\exp(-t)} .
\]
The selector observes the pool labels and a small target validation set; final performance is
measured on an independent target-distribution test set. All trajectories are produced by
full-batch gradient descent with a linearly-decaying learning rate. After selection, each
method retrains a fresh logistic model from the same initialization on its selected subset.

\paragraph{Training grids.} Pool, validation, and retraining trajectories share the same
optimizer family. We sweep step counts in \(\{20,40,80,160\}\) and base learning rates in
\(\{0.3,0.5,0.7\}\); the validation-induced trajectory uses an additional learning-rate
multiplier in \(\{0.5,1.0,2.0\}\). Selection budgets cover a logarithmic grid up to the pool
size.

\paragraph{Balanced mixture.} The candidate pool contains equal mass from a target direction
\(\theta^\star\) and an orthogonal distractor direction \(\theta'\); validation and test data
are drawn from \(\mathcal{P}_{\theta^\star}\). We use \(d=10\), a pool of order \(10^5\)
examples, a validation set of order \(10^3\), and a target-distribution test set of order
\(10^4\).

\paragraph{Rare-target stress test.} Same logistic family with scarce target support: only
\(5\%\) of the pool is drawn from \(\mathcal{P}_{\theta^\star}\) and the remainder from
distractor environments. We use \(d=48\), a small target validation set of order \(10^1\)
examples, and a fixed selection budget \(k=400\). Target-source precision measures whether a
selector recovers examples from the scarce target-support component; downstream target
accuracy measures whether the selected subset improves the final model.
Figure~\ref{fig:logreg_appendix_summary} reports the budget trend for both settings; at the table-equivalent endpoints (\(n=8192\) balanced, \(k=400\) rare), the selected-set recovery diagnostics are \(63.8\%\) target-source fraction for TACS vs.\ \(50.1\%\) for Random in the balanced setting, and target-source precision \(0.289\pm0.087\) (TACS) vs.\ \(0.179\pm0.069\) (LESS), \(0.156\pm0.060\) (ToV), \(0.044\pm0.011\) (Random) in the rare-target setting.

\subsubsection{Trajectory Projection and Shape Distance}

Figure~\ref{fig:toy_logistic} summarizes how the selected-subset retraining trajectory
relates to the validation-induced and pool-induced reference paths. Let
\(\theta^{(r)}_{0:T}\), \(\theta^{(v)}_{0:T}\), and \(\theta^{(b)}_{0:T}\) denote the
retraining, validation-induced, and pool-induced trajectories.

For Figure~\ref{fig:toy_logistic}a we project only the final endpoint displacements onto a two-dimensional basis
\((e_1,e_2)\) given by the retraining displacement and the orthogonal component of the
validation displacement:
\[
    e_1=\frac{\theta^{(r)}_T-\theta^{(r)}_0}{\|\theta^{(r)}_T-\theta^{(r)}_0\|_2},
    \qquad
    e_2=\frac{u}{\|u\|_2},\quad
    u=\theta^{(v)}_T-\theta^{(v)}_0-\langle\theta^{(v)}_T-\theta^{(v)}_0,e_1\rangle e_1.
\]
Projection is for visualization only; scoring and retraining run in the original parameter
space. Panel (a) should therefore be read as an endpoint-direction summary, not as a
checkpoint-level trajectory plot. Each segment starts at the shared initialization and ends
at the normalized final displacement of one trajectory. The basis \((e_1,e_2)\) is computed
\emph{post hoc} from the final retraining and validation displacements so that their endpoint
directions are visible in a common plane. Intermediate bending, curvature, and checkpoint
spacing are intentionally not shown in panel (a); these are evaluated by the shape-distance
diagnostic in Figure~\ref{fig:toy_logistic}b, which is computed in the original
\(\mathbb{R}^d\).

For Figure~\ref{fig:toy_logistic}b we compare trajectory shapes after normalizing path
length. For \(\Theta=\{\theta_t\}\), \(\Phi=\{\phi_t\}\), let
\(\bar{\theta}_t=(\theta_t-\theta_0)/\|\theta_T-\theta_0\|_2\) (and similarly for
\(\bar{\phi}_t\)); the shape distance is defined as the average checkpoint-wise distance
\[d(\Theta,\Phi):=(T+1)^{-1}\sum_{t = 0}^T\|\bar{\theta}_t-\bar{\phi}_t\|_2.\] The reported shape-distance
ratio is \[ \frac{d(\theta^{(v)},\theta^{(r)})}{d(\theta^{(b)},\theta^{(r)})};\] values below one mean
the retraining path is closer in shape to the validation-induced path than to the pool-induced
one. This diagnostic is not used by the selection algorithm.

\begin{figure}[t]
\centering
\includegraphics[width=0.95\textwidth]{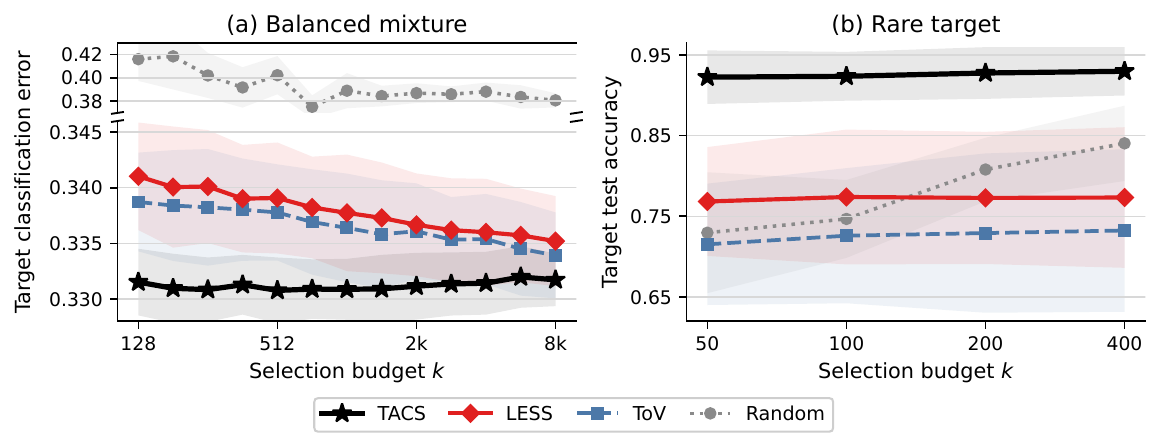}
\caption{
Controlled logistic-regression metrics across budgets, mean \(\pm\) std over 10 seeds.
\textbf{(a)} Balanced mixture target error (lower better; broken y-axis).
\textbf{(b)} Rare-target accuracy (higher better).
}
\label{fig:logreg_appendix_summary}
\end{figure}

\subsubsection{Scoring Protocol}

TACS scores each candidate by normalized loss reduction along the validation-induced
trajectory,
\[
    s_i^{\rm TACS}
    =
    \frac{\ell_i(\theta_{\rm ref})-\ell_i(\theta_{\rm last})}
         {\max\{\ell_i(\theta_{\rm ref}),\varepsilon\}} ,
\]
where \(\theta_{\rm ref}\) is the first validation checkpoint and \(\theta_{\rm last}\) the
last. The validation-warmup length and learning-rate multiplier are tuned by a small
held-out proxy-vs-negative-reference AUROC search over the grids in the previous paragraph,
using the same calibration protocol as Appendix~\ref{sec:appendix_hyperparameters}.

For the logistic baselines, we use controlled analogues of LESS and ToV rather than the
original instruction-tuning implementations. They are customized to the logistic-regression
setting so that all dynamic selectors share the same candidate-pool warmup checkpoints
\(\{\theta^{\rm pool}_t\}_{t\in\mathcal{K}}\), optimizer grid, validation set, selection
budget, and downstream retraining protocol. LESS uses the finite-dimensional analogue of
trajectory-gradient matching:
\[
    s_i^{\rm LESS}
    =
    |\mathcal{K}|^{-1}\sum_{t\in\mathcal{K}}
    \left\langle
        \nabla_\theta \ell(\theta^{\rm pool}_t;z_i),
        \nabla_\theta R_{\rm val}(\theta^{\rm pool}_t)
    \right\rangle .
\]
ToV perturbs each pool checkpoint by one short validation update,
\(\theta^{\rm tov}_t=\theta^{\rm pool}_t-\alpha\eta_t\nabla_\theta R_{\rm val}(\theta^{\rm pool}_t)\),
and averages the candidate improvement
\[
    s_i^{\rm ToV}
    =
    |\mathcal{K}|^{-1}\sum_{t\in\mathcal{K}}
    \bigl[
        \ell(\theta^{\rm pool}_t;z_i)-\ell(\theta^{\rm tov}_t;z_i)
    \bigr].
\]
We tune the ToV perturbation multiplier \(\alpha\) over the same multiplier grid used for
validation-induced calibration. Thus the logistic comparison isolates the reference-path and
scoring-direction choice: all three dynamic selectors use matched optimizers, checkpoint
budgets, validation data, and downstream retraining.

\subsection{Details for Computer Vision Runs}
\label{app:cv_details}

\paragraph{Setup.} The computer-vision experiments test whether validation-induced scoring
transfers outside instruction tuning. We use CIFAR-10~\cite{krizhevsky2009learning} with an
ImageNet-pretrained~\cite{deng2009imagenet} \texttt{ResNet-18}
backbone~\cite{he2016deep}. The target task is binary CIFAR-10 cat-vs-dog classification
with a 100-image clean validation proxy and a held-out clean target-distribution test split.
The candidate pool contains 10{,}000 CIFAR-10 training images drawn from all classes, so
target-class examples are mixed with semantically unrelated distractors. Non-target classes
are included as distractors during selection; downstream accuracy is evaluated on the
cat-vs-dog target split. In the noisy condition, labels for \(40\%\) of the pool are randomly
corrupted before selection and retraining; validation and test labels remain clean.

\paragraph{Methods.} We compare TACS with random selection,
LESS~\cite{xia2024lessselectinginfluentialdata}, GraNd,
EL2N~\cite{paul2023deeplearningdatadiet}, and Embedding Retrieval. The LESS and ToV-style
vision baselines are task-matched analogues customized for this CIFAR-10 setting, not the
original instruction-tuning benchmark configurations. TACS fine-tunes the backbone on the
validation proxy and scores each pool image by normalized endpoint loss reduction. LESS
computes validation-pool gradient similarity along a short candidate-pool warmup. GraNd and
EL2N score by early-training gradient norm and error \(L_2\)-norm.
Embedding Retrieval ranks pool images by cosine similarity between pretrained-backbone
features and the mean validation embedding. All methods use the same pool, selection budget,
preprocessing, and downstream retraining protocol.

\paragraph{Training grids.} All trainable stages use Adam. Selection budgets are
\(k\in\{100,250,500,1000\}\). Downstream retraining (and any method-internal warmup) sweeps
the learning rate in \(\{10^{-5},5{\times}10^{-5},10^{-4},5{\times}10^{-4}\}\) and the number
of epochs in \(\{2,4,8,16\}\), with mini-batch sizes in \(\{32,64,128\}\). For ToV the
validation-perturbation phase additionally uses a learning-rate multiplier in
\(\{0.1,0.3,1.0\}\) over the base-warmup learning rate. Per-method choices within these grids
are calibrated on a small held-out split of the validation proxy.

\paragraph{Evaluation.} Each method retrains on its selected subset and is evaluated on the
clean held-out target split. We report target accuracy, selected clean-label fraction, and
selected target-class fraction, which separate label-corruption and off-target failure modes.
``Full fine-tuning'' updates all \texttt{ResNet-18} parameters; ``partial fine-tuning'' updates
the final residual block and classification head. Clean-pool full-fine-tuning uses 5 seeds to
reduce variance in the non-corrupted reference setting, while the noisier robustness setting
uses 3 seeds because each run includes the additional corrupted-pool construction and
diagnostics.

At \(k=500\) under full fine-tuning, LESS obtains higher accuracy than TACS in the clean
candidate pool (\(80.48\pm2.19\%\) versus \(71.79\pm10.59\%\)). Under \(40\%\) label noise,
however, LESS degrades to \(57.68\pm3.93\%\), while TACS reaches \(77.35\pm2.36\%\). The
selected-set diagnostics mirror this reversal: in the noisy pool, TACS selects \(79.9\%\)
clean-label examples and \(83.5\%\) target-distribution examples, compared with \(40.7\%\)
and \(45.2\%\) for LESS.

\paragraph{Interpreting the Noisy-Pool Comparison.}
The clean/noisy comparison is a robustness diagnostic, not evidence that label noise is
beneficial. TACS scores come from a validation-induced trajectory independent of the candidate
pool; differences arise after truncating the ranked list at budget \(k\) and retraining on that
subset. At small budgets, a clean pool can concentrate on a narrow high-alignment mode, whereas
the noisy pool pushes corrupted and off-target examples into the negative tail and can leave a
more spread-out clean target-class subset. This effect is budget-limited: by \(k=1000\), clean
full-fine-tuning again exceeds noisy full-fine-tuning (\(78.1\%\) versus \(75.7\%\)). A direct
fix for the small-budget clean-pool case would be diversity or deduplication re-ranking on top
of TACS scores.

\paragraph{Trends.} Figure~\ref{fig:cv_budget_appendix} expands the main-text \(k=500\)
slice across budgets and pools. Under full fine-tuning, TACS and LESS are the strongest
methods at moderate and large budgets; generic difficulty scores (GraNd, EL2N) remain weaker
because they are not conditioned on the target proxy. The noisy-pool panels are the most
diagnostic: TACS retains high target accuracy under \(40\%\) label corruption, while LESS
degrades; the corresponding clean-label and target-distribution fractions of the selected
subsets move in the same direction, supporting the interpretation that the validation-induced
trajectory acts as a task-conditioned filter rather than a generic difficulty score.

\begin{figure}[t]
\centering
\includegraphics[width=0.92\textwidth]{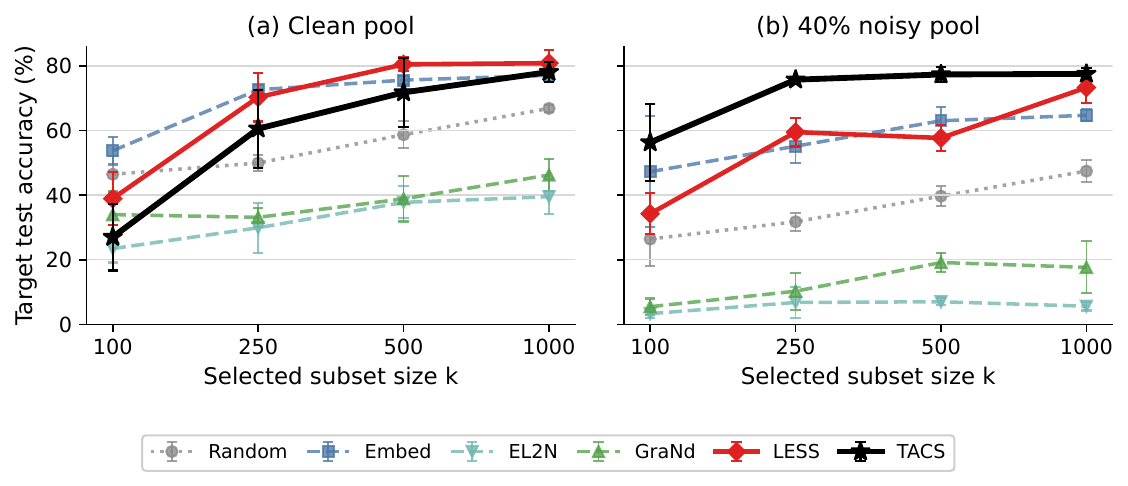}
\caption{
Binary CIFAR-10 target accuracy across budgets under clean and 40\% noisy pools. Full fine-tuning updates all \texttt{ResNet-18} parameters.
}
\label{fig:cv_budget_appendix}
\end{figure}

Figure~\ref{fig:cv_partialft_appendix} repeats the budget sweep with a restricted downstream
learner: only the final residual block and classification head are retrained after selection.
The same broad ordering persists, with TACS and LESS separated from random selection and
generic data-pruning metrics at larger budgets. This suggests that the selected examples carry
target-relevant information even when downstream retraining has limited capacity to compensate
for poor selection.

\begin{figure}[t]
\centering
\includegraphics[width=0.92\textwidth]{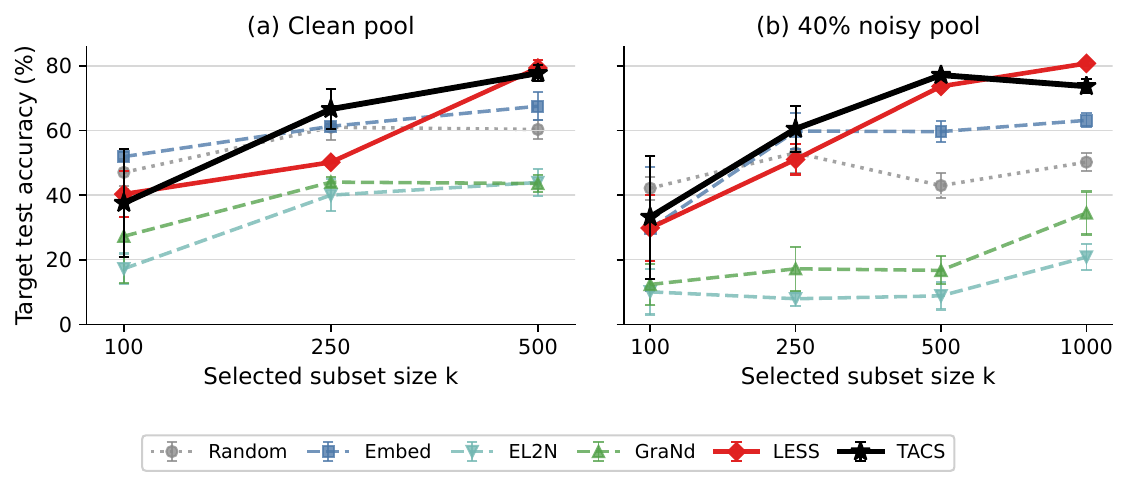}
\caption{
Binary CIFAR-10 partial fine-tuning: only layer4 and the classification head are retrained after selection.
}
\label{fig:cv_partialft_appendix}
\end{figure}

\subsection{Details for Instruction Tuning}
\label{app:instruction_tuning_details}

In this section, we provide the comprehensive implementation details, dataset statistics, and hyperparameter configurations omitted from the main text for the instruction tuning experiments (Section~\ref{sec:instruction_tuning}).

\subsubsection{Dataset}
Table~\ref{tab:app_data_summary} details the sizes and formats of the candidate pools and target development sets used in our cross-distribution evaluation. We utilize standard splits for all datasets. Our candidate pools consist of four widely used instruction tuning collections: Flan V2~\cite{longpre2023flan} (licensed under Apache 2.0), Oasst1~\cite{kopf2023openassistant} (Apache 2.0), COT~\cite{wei2023chainofthoughtpromptingelicitsreasoning} (MIT License), and Dolly~\cite{conover2023freedolly} (CC BY-SA 3.0). For downstream evaluation and target proxies, we employ MMLU~\cite{hendrycks2020measure} (MIT License) for general knowledge, BBH~\cite{suzgun2022challenging} (MIT License) for complex reasoning, and TyDiQA~\cite{clark2020tydi} (Apache 2.0) for multilingual question answering. To maintain structural consistency across all diverse sources and tasks listed in Table~\ref{tab:app_data_summary}, every example is compiled into the standard Tulu chat format~\cite{wang2023farcamelsgoexploring} prior to tokenization and training.

\begin{table}[htbp]
    \caption{Dataset statistics for candidate training pools ($\mathcal{Z}$) and target task development sets ($\mathcal{Z}_{\text{val}}$).}
    \label{tab:app_data_summary}
    \small
    \centering
    \begin{tabular}{@{}lr c lrl@{}}
    \toprule
    \multicolumn{2}{c}{\textbf{Candidate Pools}} & \phantom{a} & \multicolumn{3}{c}{\textbf{Target Dev Sets}} \\
    \cmidrule{1-2} \cmidrule{4-6}
    \textbf{Source} & \textbf{Size} & & \textbf{Task} & \textbf{Size} & \textbf{Format} \\
    \midrule
    \texttt{flan\_v2} & 100{,}000 & & \texttt{mmlu}  & 285 & (5-shot) \\
    \texttt{cot}    & 100{,}000 & & \texttt{bbh}   & 81  & (1-shot) \\
    \texttt{oasst1}   & 55{,}668  & & \texttt{tydiqa} & 9   & (1-shot) \\
    \texttt{dolly}    & 15{,}011  & & & & \\
    \bottomrule
    \end{tabular}
\end{table}

\begin{table}[htbp]
    \caption{Standardized Tulu format template used for compiling all dataset examples.}
    \label{tab:tulu_template}
    \small
    \centering
    \begin{tabular}{@{}p{0.85\linewidth}@{}}
    \toprule
    \textbf{Tulu Chat Format} \\
    \midrule
    \texttt{<|system|>} \\
    \texttt{A chat between a curious user and an artificial intelligence assistant...} \\
    \texttt{<|user|>} \\
    \texttt{\{Instruction / Context / Prompt\}} \\
    \texttt{<|assistant|>} \\
    \texttt{\{Target Response\}} \\
    \bottomrule
    \end{tabular}
\end{table}

\subsubsection{Model and Training Configurations}
For all experiments, we use the \texttt{Llama-3.2-3B} foundation model (released under the Llama 3.2 Community License)~\cite{dubey2024llama}. For standard downstream retraining and candidate pool warmups (used by baselines), we apply Low-Rank Adaptation (LoRA) to the base model with a rank of $r=128$ and a scaling factor of $\alpha=512$. These downstream retraining and candidate-pool warmup runs use AdamW (\texttt{adamw\_torch}) with weight decay 0.0. In contrast, for the main cross-candidate-source results table, the TACS validation-induced warmup uses a restricted capacity bottleneck with $r=1$ and $\alpha=4$, also optimized with AdamW and weight decay 0.0. As analyzed in the ablation study, this ultra-low rank acts as a structural regularizer to improve the stability of TACS on small validation sets and makes validation-induced warmup trajectories cheap to store offline for reuse. 

\subsubsection{Baseline Implementations}
We compare our method against LESS~\cite{xia2024lessselectinginfluentialdata} and ToV~\cite{jain2025trainvalidationtovfast}. The definitions below apply to the instruction-tuning experiments; the controlled logistic-regression and CIFAR-10 experiments use matched analogues described in their respective sections rather than the original LESS/ToV benchmark settings.
\begin{itemize}
    \item For \textbf{LESS}, we follow the authors' official implementation for gradient projection (dimension \(d_{\rm proj}=8192\)). Let \(m_i^{(t)}\) denote the Adam-moment candidate gradient representation for candidate \(z_i\) at checkpoint \(t\), \(g_{\rm val}^{(t)}\) the validation gradient representation, and \(P_t\) the random projection. We score by aggregated projected cosine similarity:
    \[
        s_i^{\rm LESS}
        =
        |\mathcal{K}|^{-1}\sum_{t\in\mathcal{K}}
        \frac{\langle P_t m_i^{(t)}, P_t g_{\rm val}^{(t)}\rangle}
        {\|P_t m_i^{(t)}\|_2\,\|P_t g_{\rm val}^{(t)}\|_2}.
    \]
    During warmup, the model is trained on a randomly selected subset of the corresponding data pool with the same size as the selection budget for 4 epochs. 
    
    \item For \textbf{ToV}, we implement method A and the ``Maximum-Improvement'' rule from the ToV paper. For each base checkpoint \(\theta_t\), we perturb on the target validation proxy to obtain \(\theta'_t\). Candidate \(z_i\) is scored by the largest raw candidate-loss improvement across perturbation checkpoints:
    \[
        s_i^{\rm ToV}
        =
        \max_{t\in\mathcal{K}}
        \left[
            \ell(\theta_t;z_i)-\ell(\theta'_t;z_i)
        \right].
    \]
    Following their methodology, the perturbation learning rate is \(0.1\times\) the base learning rate for stability.
\end{itemize}

\subsubsection{Evaluation Protocol and Hardware}
In all experiments, we fix the selection budget to $5\%$ of the respective candidate pool size. Final results are evaluated on the standard held-out test sets of MMLU, BBH, and TyDiQA. All reported metrics are averaged across 3 independent random seeds. All experiments, including warmup, scoring, and retraining, were conducted on a single NVIDIA H200 GPU.

\subsection{Ablation Study}
\subsubsection{Length Normalization and Selected-Example Length}
Instruction-tuning examples vary substantially in output length, so raw loss-drop scores can
be confounded by scale.  We therefore diagnose the effect of the normalized TACS score
\[
    s(z)=\frac{\ell(\theta_1;z)-\ell(\theta_T;z)}
    {\max\{\ell(\theta_1;z),\varepsilon\}}
\]
against the unnormalized loss gap \(\ell(\theta_1;z)-\ell(\theta_T;z)\).  The diagnostic uses
the paper-facing \texttt{Llama-3.2-3B} sourcewise TACS benchmark and compares the top-5\% selected set
for every task-source cell.

Figure~\ref{fig:length_normalization_scatter} shows that first-loss normalization does not
create a short-example bias.  In all 12 task-source cells, normalized scoring selects longer
examples on average than raw loss-gap scoring.  The average increase in selected mean length
is \(+59.6\) words, with median increase \(+45.5\) words.  The selected sets also change
substantially: raw-vs-normalized overlap ranges from \(30.6\%\) to \(71.4\%\).  The largest
mean-length shifts occur for BBH/COT (\(+122.9\) words), MMLU/COT (\(+98.7\) words), and
TyDiQA/COT (\(+98.9\) words).  This supports using first-loss normalization in the main
experiments: it changes the ranking materially while avoiding the tendency of raw loss gaps
to favor short, high-variance examples.

\begin{figure}[t]
\centering
\includegraphics[width=0.62\textwidth]{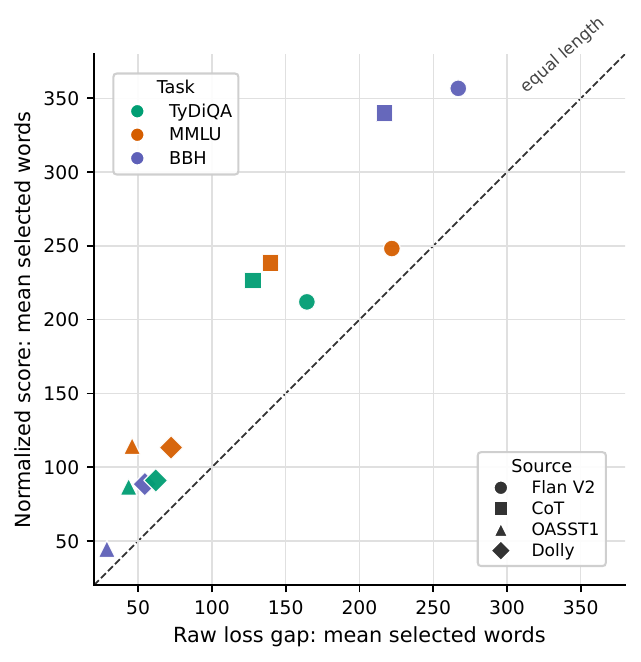}
\caption{First-loss normalization vs.\ selected-example length. Each point is one task-source cell, with color indicating the target task and marker shape indicating the candidate source. Points above the diagonal mean normalization selects longer examples.}
\label{fig:length_normalization_scatter}
\end{figure}

Figure~\ref{fig:length_hist_appendix} gives the corresponding distributional view.  The pool
length distributions are strongly right-skewed, so raw endpoint loss gaps can be dominated by
scale effects from unusually short or long examples depending on the source.  Normalization does
not collapse selection to a narrow length band: the normalized top-5\% generally remains spread
over the support of the pool and often shifts the selected mean to the right of the raw-loss
selection.  This supports the use of relative loss reduction as a length-stabilized score rather
than as a hidden preference for short outputs.

\begin{figure}[t]
\centering
\includegraphics[width=0.95\textwidth]{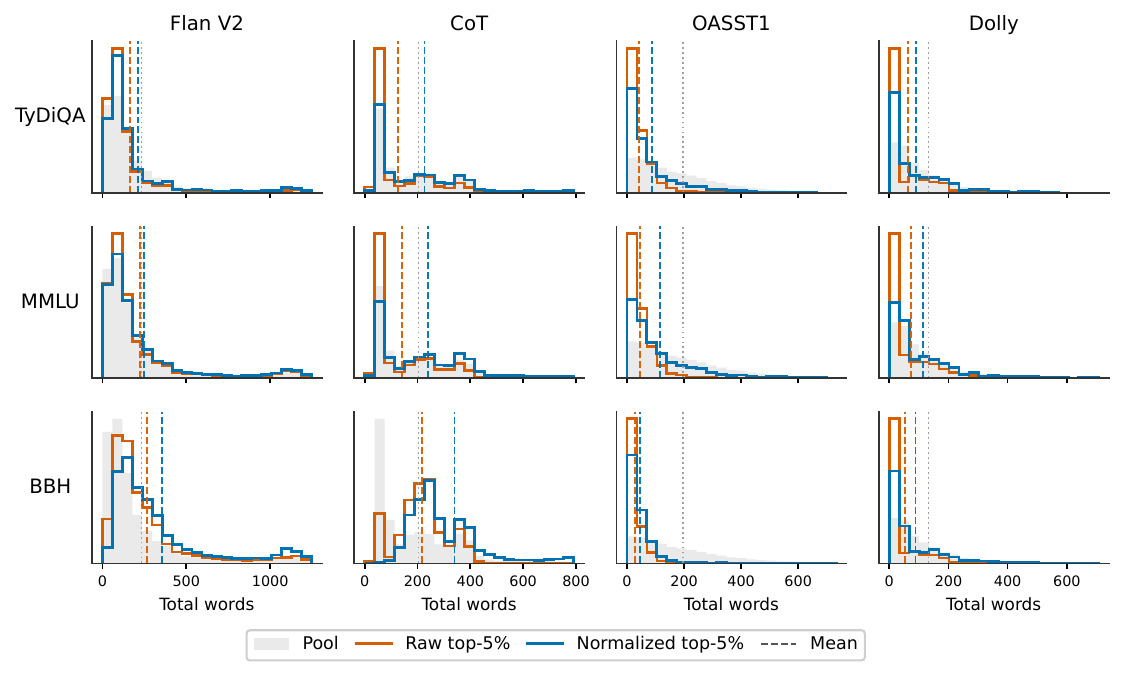}
\caption{Selected-length distributions under raw and normalized endpoint loss gaps. Gray histograms show the full pool, colored curves show selected top-5\% subsets, and dashed vertical lines mark means.}
\label{fig:length_hist_appendix}
\end{figure}

\subsection{Score Variant Ablation}
\label{app:score-variant-ablation}

The normalized endpoint score in Eq.~\eqref{eqn:norm_loss_gap} is motivated
primarily by instruction-tuning settings, where examples may differ in length,
format, and initial loss scale. Since TACS is not specific to LLMs, we also
test in our controlled prediction settings how two engineering choices in the
score definition affect selection performance: (i) normalizing the endpoint
loss drop, and (ii) using $\theta^{\mathrm{val}}_1$ rather than
$\theta^{\mathrm{val}}_0$ as the baseline checkpoint. The main TACS score
corresponds to the normalized $\theta^{\mathrm{val}}_1 \!\to\!
\theta^{\mathrm{val}}_T$ variant.

We compare four score variants: raw and normalized endpoint loss drops, each
computed either from $\theta^{\mathrm{val}}_0$ or from
$\theta^{\mathrm{val}}_1$. Concretely, for \(a\in\{0,1\}\),
\[
    s^{\rm raw}_a(z)=\ell(\theta^{\mathrm{val}}_a;z)-\ell(\theta^{\mathrm{val}}_T;z),
    \qquad
    s^{\rm norm}_a(z)=
    \frac{\ell(\theta^{\mathrm{val}}_a;z)-\ell(\theta^{\mathrm{val}}_T;z)}
    {\max\{\ell(\theta^{\mathrm{val}}_a;z),\varepsilon\}} .
\]
The main score is \(s^{\rm norm}_1\). In the logistic-mixture setting, all four variants
are nearly identical: they produce the same ranking diagnostics, almost the
same top-$k$ target fractions, and indistinguishable downstream classification
error. This indicates that, in the controlled convex setting, the TACS signal
is not an artifact of normalization or of dropping the first checkpoint.

In noisy CIFAR-10, normalization has a clearer practical effect.
Figure~\ref{fig:score-variant-ablation} shows that normalized scores select
substantially higher fractions of target-class and clean-label examples,
especially at small budgets. For example, at $k=100$, the target-class fraction
improves from $0.623$ to $0.810$ when using normalization with the
$\theta^{\mathrm{val}}_0 \!\to\! \theta^{\mathrm{val}}_T$ score, and from
$0.553$ to $0.810$ for the corresponding $\theta^{\mathrm{val}}_1$-based
score. By contrast, replacing $\theta^{\mathrm{val}}_0$ with
$\theta^{\mathrm{val}}_1$ has a smaller empirical effect. Overall, these
results suggest that normalization is the practically important correction in
the noisy neural setting, while dropping $\theta^{\mathrm{val}}_0$ is a mild
and conservative stabilization choice.

\begin{figure}[t]
    \centering
    \begin{minipage}{0.48\linewidth}
        \centering
        \includegraphics[width=\linewidth]{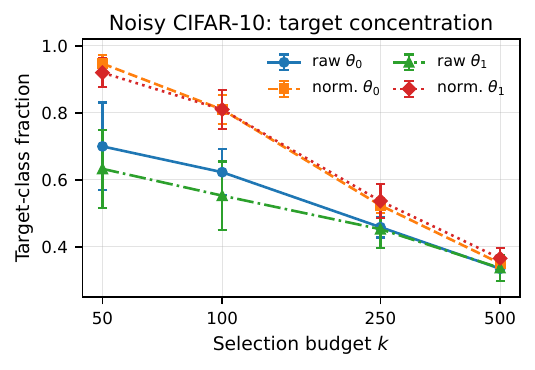}
        \vspace{-0.5em}
        \centerline{\small (a) CIFAR target-class fraction}
    \end{minipage}
    \hfill
    \begin{minipage}{0.48\linewidth}
        \centering
        \includegraphics[width=\linewidth]{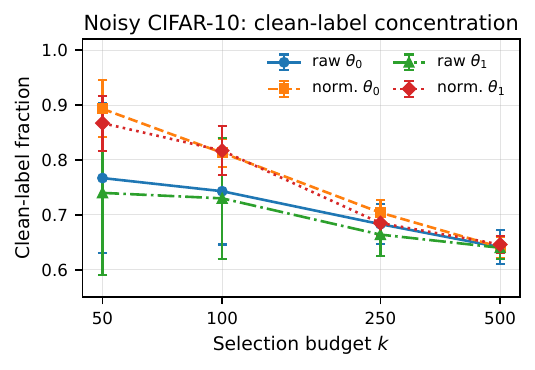}
        \vspace{-0.5em}
        \centerline{\small (b) CIFAR clean-label fraction}
    \end{minipage}

    \vspace{0.75em}

    \begin{minipage}{0.48\linewidth}
        \centering
        \includegraphics[width=\linewidth]{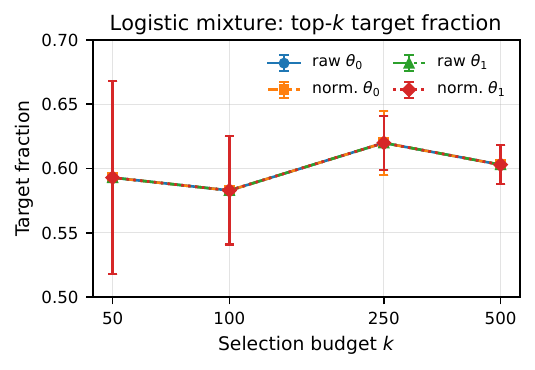}
        \vspace{-0.5em}
        \centerline{\small (c) Logistic target fraction}
    \end{minipage}
    \hfill
    \begin{minipage}{0.48\linewidth}
        \centering
        \includegraphics[width=\linewidth]{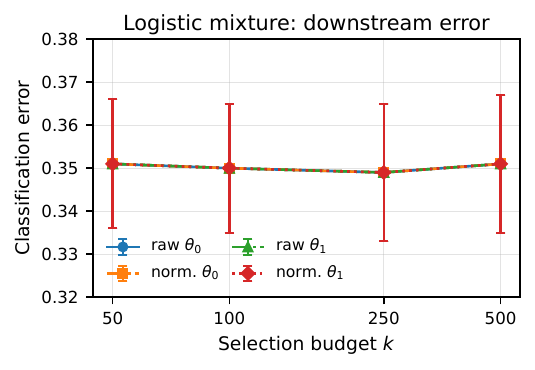}
        \vspace{-0.5em}
        \centerline{\small (d) Logistic downstream error}
    \end{minipage}

    \caption{
    Score-variant ablation in noisy CIFAR-10 and logistic regression.
    In CIFAR-10, normalization substantially improves target-class and
    clean-label concentration, especially at small selection budgets.
    In logistic regression, all four variants are nearly indistinguishable,
    indicating that the TACS signal in the controlled convex setting is not an
    artifact of normalization or first-checkpoint removal.
    }
    \label{fig:score-variant-ablation}
\end{figure}

\subsection{Hyperparameter Selection via M-Fold Cross-Validation}
\label{sec:appendix_hyperparameters}

% [Keep your Algorithm block here, but consider changing D to just T \in \{4, 8, 12, 16\}]

To automate the selection of the two key target-conditioned warmup hyperparameters--learning rate ($\eta$) and total training epochs ($T$)--we implement a standard $M$-fold cross-validation procedure~\cite{stone1974crossvalidatory,kohavi1995study}. Crucially, to maintain the $\mathcal{O}(1)$ computational efficiency of TACS, we utilize an ultra-low capacity LoRA adapter ($r=1, \alpha=4$) for all calibration runs. This is a small one-time calibration cost: in our \texttt{Llama-3.2-3B} implementation, Table~\ref{tab:empirical_compute_cost} reports 1.2 H200 GPU hours for calibration, after which the selected trajectory can be reused under the same target, base model, and scoring setup.

We select $\eta$ and $T$ via a joint grid search over $\eta \in \{2\times 10^{-6}, 5\times 10^{-6}, 2\times 10^{-5}, 5\times 10^{-5}, 2\times 10^{-4}\}$ and $T \in \{4, 8, 12, 16\}$ epochs. Our core intuition is that an optimal warmup trajectory should maximize the geometric separation between the target distribution and a generic negative reference sample.

\begin{algorithm}[ht]
\caption{\(M\)-Fold Trajectory Calibration}
\label{alg:trajectory_calibration}
\small
\begin{algorithmic}[1]
    \STATE \textbf{Input:} Target validation set \(\Zval\), generic negative reference sample \(S_{\mathrm{neg}}\)
    \STATE \textbf{Search space:} learning-rate grid \(H_{\eta}\), epoch grid \(H_T=\{4,8,12,16\}\)
    \STATE Partition \(\Zval\) into \(M=3\) folds
    \FOR{each learning rate \(\eta\in H_{\eta}\)}
        \FOR{each fold \(i=1,\ldots,M\)}
            \STATE Form warmup set \(\Z_{\mathrm{warmup}}^{(i)}\) from the remaining folds and held-out positives \(S_{\mathrm{target}}^{(i)}\)
            \STATE Train rank-one LoRA on \(\Z_{\mathrm{warmup}}^{(i)}\) with learning rate \(\eta\) up to \(\max H_T\) epochs
            \FOR{each \(T\in H_T\)}
                \STATE Extract \(\theta_1\) and \(\theta_T\)
                \STATE Compute normalized loss reduction for \(S_{\mathrm{target}}^{(i)}\cup S_{\mathrm{neg}}\)
                \STATE Compute \(\mathrm{AUROC}^{(i)}(\eta,T)\) separating target positives from reference negatives
            \ENDFOR
        \ENDFOR
        \STATE Compute mean AUROC \(\bar A(\eta,T)=M^{-1}\sum_i \mathrm{AUROC}^{(i)}(\eta,T)\)
    \ENDFOR
    \STATE \textbf{return} \((\eta^*,T^*)=\arg\max_{\eta,T}\bar A(\eta,T)\)
\end{algorithmic}
\end{algorithm}

\subsubsection{Probing Set Construction}
We employ an $M$-fold cross-validation strategy ($M=3$) on the target validation proxy $\Zval$. For each fold $i$, we partition $\Zval$ such that two-thirds form the active warmup set $\Z_{\mathrm{warmup}}^{(i)}$ and one-third serves as the held-out positive class $S_{\mathrm{target}}^{(i)}$. To represent the negative class, we use a fixed generic reference sample \(S_{\mathrm{neg}}\) with \(N_{\mathrm{neg}}=100\) examples. In our experiments this reference is sampled uniformly from the available uncurated pool for convenience, but it need not be drawn from the candidate pool later being scored. The negative sample is used \emph{only} for hyperparameter selection; final candidate ranking is recomputed over the entire uncurated pool using a trajectory trained on the full $\Zval$.

\subsubsection{Optimization Objective}
We quantify candidate utility via the Normalized Loss Reduction \(s(x) = (\ell(\theta_1; x) - \ell(\theta_T; x)) / \max\{\ell(\theta_1; x),\varepsilon\}\). The objective is to maximize the average separation between the held-out target proxy $S_{\mathrm{target}}^{(i)}$ and the negative reference sample $S_{\mathrm{neg}}$. For each fold $i$, we pool the scores for $S_{\mathrm{target}}^{(i)} \cup S_{\mathrm{neg}}$ and calculate the AUROC using the Mann--Whitney rank formulation:
\begin{equation}
    \text{AUROC}^{(i)}(\eta, T)
    =
    \frac{\sum_{x \in S_{\mathrm{target}}^{(i)}} \operatorname{rank}(s(x))
    - |S_{\mathrm{target}}^{(i)}|(|S_{\mathrm{target}}^{(i)}|+1)/2}
    {|S_{\mathrm{target}}^{(i)}|\,|S_{\mathrm{neg}}|}
\end{equation}
where $\operatorname{rank}(\cdot)$ denotes the ascending rank of the scores in the joint set.

This calibration is intentionally heuristic. For very small target proxies, such as the
9-example TyDiQA development set, each three-fold held-out split contains only three positive
examples, so the AUROC objective is discrete and can have high variance. We use it to choose
a reasonable traversal scale, not as a statistically precise estimate of final downstream
performance.

\subsubsection{Joint Grid Search Procedure}
Optimizing $\eta$ and $T$ jointly identifies the optimal traversal speed through the target task's \textit{Structural Alignment Regime}. To optimize compute, we evaluate all epoch checkpoints $T$ for a given learning rate within a single training run per fold. Once the optimal configuration $(\eta^*, T^*)$ is identified by the highest mean AUROC across folds, we execute a single, final warmup trajectory using the entirety of $\Zval$ to score the downstream candidate pool.
% Top scoring examples by different methods.

\end{document}